\documentclass[journal]{IEEEtran}

\usepackage{graphicx}
\usepackage{amsmath,amssymb}
\usepackage{amsthm}

\newtheorem{lemma}{Lemma}[section]
\newtheorem{proposition}{Proposition}[section]
\newtheorem{theorem}{Theorem}[section]

\usepackage{xcolor}
\newcommand{\blue}[1]{\textcolor{black}{#1}}
\newcommand{\red}[1]{\textcolor{black}{#1}}

\graphicspath{{figures/}{Fig/}}
\usepackage{algorithm,algorithmic}
\usepackage{hyperref}
\usepackage{multirow}

\title{Cloud-Edge Collaborative Large Models for Robust Photovoltaic Power Forecasting}

\author{Nan QIAO, Shuning WANG, Sijing DUAN, Wenpeng CUI, Yuzhe CHEN, Qingchen YANG, Xingyuan HUA, and Ju REN%
\thanks{Nan QIAO and Shuning WANG contributed equally to this work.}%
\thanks{Nan QIAO and Shuning WANG are with the School of Computer Science and Engineering, Central South University, Changsha, Hunan 410083, China, and also with the Department of Computer Science and Technology, BNRist, Tsinghua University, Beijing 100084, China.}%
\thanks{Sijing DUAN, Xingyuan HUA, and Ju REN are with the Department of Computer Science and Technology, BNRist, Tsinghua University, Beijing 100084, China.}%
\thanks{Wenpeng CUI, Yuzhe CHEN, and Qingchen YANG are with Beijing Smartchip Microelectronics Technology Co., Ltd, Beijing 102200, China.}%
\thanks{Corresponding author: Sijing DUAN (duansj@tsinghua.edu.cn).}}
\begin{document}

\maketitle
\begin{abstract}
% Photovoltaic (PV) power forecasting in edge-enabled grids requires balancing forecasting accuracy, robustness under weather-driven distribution shifts, and strict latency constraints. Local specialized models are efficient for routine conditions but often degrade under rare ramp events and unseen weather patterns, whereas always relying on cloud-side large models incurs substantial communication delay and cloud overhead. To address this challenge, we propose a risk-aware cloud-edge collaborative framework for latency-sensitive PV forecasting. The framework integrates a site-specific expert predictor for routine cases, a lightweight edge-side model for enhanced local inference, and a cloud-side large retrieval model that provides matched historical context when needed through a retrieval-prediction pipeline. A lightweight screening module estimates predictive uncertainty, out-of-distribution risk, weather mutation intensity, and model disagreement, while a Lyapunov-guided router selectively escalates inference to the edge-small or cloud-assisted branches under long-term latency, communication, and cloud-usage constraints. The outputs of the activated branches are combined through adaptive fusion. Experiments on two real-world PV datasets demonstrate a favorable overall trade-off among forecasting accuracy, routing quality, robustness, and system efficiency.
Photovoltaic (PV) power forecasting in edge-enabled grids requires balancing forecasting accuracy, robustness under weather-driven distribution shifts, and strict latency constraints. Existing models work well under normal conditions but often struggle with rare ramp events and unexpected weather changes. Relying solely on cloud-based large models often leads to significant communication delays, which can hinder timely and efficient forecasting in practical grid environments. To address these issues, we propose a condition-adaptive cloud-edge collaborative framework \emph{CAPE} for PV forecasting. \emph{CAPE} consists of three main modules: a site-specific expert model for routine predictions, a lightweight edge-side model for enhanced local inference, and a cloud-based large retrieval model that provides relevant historical cases when needed. These modules are coordinated by a screening module that evaluates uncertainty, out-of-distribution risk, weather mutations, and model disagreement. Furthermore, we employ a Lyapunov-guided routing strategy to dynamically determine when to escalate inference to more powerful models under long-term system constraints. The final forecast is produced through adaptive fusion of the selected model outputs. Experiments on two real-world PV datasets demonstrate that \emph{CAPE} achieves superior performance in terms of forecasting accuracy, robustness, routing quality, and system efficiency.
\end{abstract}

\begin{IEEEkeywords}
edge intelligence, large model, photovoltaic power forecasting, cloud-edge collaboration
\end{IEEEkeywords}

\section{Introduction}\label{sec:intro}
With the rapid growth of distributed photovoltaic (PV) generation, short-term forecasting is critical for secure dispatch and frequency regulation in modern power systems \cite{Hong2016,Antonanzas2016,Luo2021PCLSTM}. Unlike conventional load forecasting, PV prediction is dominated by highly non-stationary meteorological factors, such as cloud advection and abrupt ramp events, causing accuracy to deteriorate sharply under rapidly changing weather conditions \cite{Si2021,Zang2024}. This challenge is amplified by the shift toward edge intelligence, where smart meters and edge controllers require low-latency decision support across heterogeneous sites and equipment configurations \cite{Gooi2023,Li2022EdgeCloud,Li2020Edgent,Qiao2026FOVA}. Furthermore, many newly deployed PV plants operate in data-scarce regimes, relying on transferable cross-site knowledge rather than extensive site-specific retraining \cite{Sarmas2022DataScarcity,Paletta2024CrossSite,Ansari2024Chronos,Das2024TimesFM}. 
% Because PV forecasting increasingly depends on heterogeneous observations (e.g., historical power, meteorological covariates, and sky images) that specialized models struggle to exploit jointly \cite{Si2021,Zang2024,Lin2025PVVLM}, there is a strong motivation to adopt large models with robust zero-shot transfer capabilities and unified multi-source reasoning \cite{wu2024netllm,Ansari2024Chronos,Das2024TimesFM,Woo2024Moirai,Hollmann2025TabPFN,Zhang2025LimiX}.
Since PV forecasting increasingly depends on heterogeneous observations like historical power, meteorological covariates, and sky images, specialized models often struggle to exploit these diverse data sources jointly \cite{Si2021,Zang2024,Lin2025PVVLM}. This challenge creates a strong motivation to adopt large models known for their robust zero-shot transfer capabilities and unified multi-source reasoning \cite{wu2024netllm,Ansari2024Chronos,Das2024TimesFM,Woo2024Moirai,Hollmann2025TabPFN,Zhang2025LimiX}.

However, the need for large models in PV forecasting extends beyond mere parameter scale. Most existing predictors are correlation-driven, performing reliably only within the operating regimes covered by their training data \cite{liu2024itransformer,wang2024timemixer,Das2024TimesFM}. Under unseen cloud evolution or extreme weather, they often rely on spurious correlations and fail precisely during the most challenging intervals for grid operation \cite{Luo2021PCLSTM,Luo2026PMWC}. Therefore, large models capable of capturing stable, physically meaningful dependencies are particularly appealing. By emphasizing robust structural relationships among irradiance, weather dynamics, and PV output, these models can improve generalization under distribution shifts and provide the interpretability needed for complex dispatch decisions \cite{Luo2026PMWC,Ma2026CausalFM,duan2025exploring}. Beyond passive prediction, extracting informative context from cross-site historical cases supports more reliable reasoning under evolving weather patterns. Recent studies confirm that large-model priors enhance transferability in intra-hour PV forecasting \cite{TimeLLM2024,Lin2025TimeLLMPV,Lin2025PVVLM}, while large models pretrained on heterogeneous structured data have shown strong potential for unified context extraction \cite{Zhang2025LimiX}.

Translating this potential into practice, however, putting them into practical PV forecasting systems faces a fundamental systems bottleneck: neither a pure edge nor a pure cloud architecture is viable.
On the one hand, relying exclusively on local edge predictors is increasingly inadequate for complex scenarios.
Specialized local models often lack generalization and tend to fail when confronted with rare events or unseen extreme weather.
Resource constraints on edge devices further limit their ability to support the computational demands of large models \cite{lin2025pushing}.
Many edge controllers possess limited computing and memory capacity, making it infeasible to run large-model inference and to store the comprehensive system-wide historical data needed for robust causal generalization \cite{Gooi2023,Li2022EdgeCloud,Li2020Edgent}.
On the other hand, a cloud-only pipeline is also problematic.
Continuously querying a large causality-aware model in the cloud leads to significant communication delays, bandwidth bottlenecks, and centralized resource contention, all of which are incompatible with the real-time demands of latency-sensitive edge operations \cite{Ren2023Survey,Jin2024CECoLLM}.
Ultimately, an inherent tension exists between edge-side resource limitations and cloud-side communication latency in PV forecasting.
% Therefore, how to navigate the dilemma between edge-side resource limitations and cloud-side communication latency to achieve efficient and accurate prediction remains a significant challenge.

To address this tension, we propose a two-stage cloud-edge collaborative prediction framework: cloud-side retrieval coupled with edge-side prediction.
Specifically, a large-scale data model in the cloud utilizes grid-wide historical records to perform correlation-aware retrieval, selecting a compact yet highly relevant subset of past cases. This distilled context is then transmitted to the edge, where a lightweight model conducts the final local inference. When the retrieved data is well matched to the current context, the forecasting accuracy relies more on the effectiveness of the cloud-side retrieval than on the complexity of the local model. Routine operating conditions remain manageable with purely local inference, while only a small fraction of scenarios require cloud-assisted large-model inference, such as ramp events, novel weather patterns, or high-uncertainty intervals. Nevertheless, this cloud-edge collaborative forecasting architecture faces several challenges, including determining the optimal timing for cloud retrieval, managing scheduling congestion, and adaptively fusing outputs from cloud and edge predictors to meet real-time demands.

% To this end, we propose a \underline{c}ondition-adaptive cloud-edge collaborative forecasting framework \emph{CAPE}, for latency-sensitive PV power prediction.
To overcome these challenges, we develop \emph{CAPE}, a \underline{c}ondition-\underline{a}da\underline{p}tive cloud-\underline{e}dge collaborative forecasting framework designed for latency-sensitive PV power prediction.
Specifically, a cloud-side large model performs cross-site case retrieval and context extraction from multi-source data.
An edge-side small model combines local measurements with cloud-provided matched data to generate robust forecasts, while a site-specific expert model serves as the default fast branch.
A screening module evaluates predictive uncertainty, distribution shift, weather mutation intensity, and model disagreement, and activates the cloud retrieval branch only when necessary.
The final forecast is obtained through confidence-aware fusion of the expert, edge-small-model, and cloud-assisted branches.
In this way, the framework preserves the speed advantage of expert edge models while leveraging the robustness and transferability of a large model in challenging scenarios.
In a nutshell, our main contributions are summarized as follows:
% Our main contributions are summarized as follows:
\begin{itemize}
% \item We formulate latency-sensitive PV forecasting as a condition-adaptive cloud-edge collaborative inference problem and propose a three-branch architecture. This framework comprises a cloud-side large model, an edge-side small model, and a site-specific expert predictor via selective invocation and confidence-aware fusion to jointly address accuracy, latency, communication, and cloud-usage considerations.
\item We formulate latency-sensitive PV forecasting as a condition-adaptive cloud-edge collaborative inference problem and propose a three-branch architecture. This framework integrates a cloud-side large model, an edge-side small model, and a site-specific expert predictor via selective invocation and confidence-aware fusion to jointly address accuracy, latency, communication, and cloud-usage considerations.
\item We prove that the optimal collaboration policy admits a routing-score threshold structure featuring a mean-field cloud-load equilibrium, and we establish a Lyapunov bound demonstrating an $\mathcal{O}(1/V)$ optimality gap and an $\mathcal{O}(V)$ queue backlog under long-term latency, communication, and cloud-usage constraints.
\item We conduct extensive experiments on two real-world PV datasets, demonstrating the superior performance of the proposed framework.
% \end{itemize}
\end{itemize} 
\section{Related Work}
\label{sec:related_work}

\paragraph{PV Power Forecasting Methods.}
PV power forecasting has been extensively studied from statistical, physical, and data-driven perspectives, and prior studies have shown that forecasting performance is highly dependent on weather variability, forecast horizon, and information availability \cite{Antonanzas2016,Hong2016}. Recent deep-learning-based methods have improved forecasting accuracy by explicitly exploiting temporal dependence and domain knowledge. For example, physics-constrained recurrent models incorporate PV-specific operational priors to alleviate unreasonable predictions under standard purely data-driven learning \cite{Luo2021PCLSTM}. To better handle rapid irradiance changes caused by cloud movement, recent studies further integrate satellite imagery or sky images and demonstrate that visual cloud information is particularly valuable in ultra-short-term and intra-hour PV forecasting \cite{Si2021,Zang2024,Lin2025PVVLM}. 

More recently, large-model-based PV forecasting has emerged, including Time-LLM-style sequence reprogramming and multimodal vision-language prediction, suggesting that large-model priors can improve transferability and cross-site generalization \cite{TimeLLM2024,Lin2025TimeLLMPV,Lin2025PVVLM}. Similarly, large weather models have also been introduced into decentralized PV forecasting through spatio-temporal knowledge distillation, further indicating that pretrained large models can enhance site-level prediction in distributed deployment settings \cite{he2025stkdpv}. However, most existing PV forecasting studies still focus on maximizing predictive accuracy for a single inference path, while the practical issues of selective large-model invocation, communication overhead, and cloud-edge deployment remain insufficiently explored \cite{Lin2025TimeLLMPV,Lin2025PVVLM,he2025stkdpv,Li2022EdgeCloud}.

\paragraph{Large Models for Time-Series Forecasting.}
Large models for time series have advanced rapidly. Recent architectures like iTransformer and TimeMixer have improved multivariate forecasting through inverted tokenization and decomposable multiscale mixing \cite{liu2024itransformer,wang2024timemixer}. Moreover, pretrained foundation models, such as Chronos, TimesFM, Moirai, and MOMENT, demonstrate strong zero-shot and few-shot capabilities across diverse benchmarks, highlighting the benefits of large-scale pretraining for transferability \cite{Ansari2024Chronos,Das2024TimesFM,Woo2024Moirai,goswami2024moment}. Concurrently, Time-LLM bridges general-purpose language priors and temporal prediction by reprogramming language models for time-series tasks \cite{TimeLLM2024}. For heterogeneous inputs, large structured-data models like TabPFN and LimiX offer unified in-context prediction and strong generalization beyond conventional per-dataset training pipelines \cite{Hollmann2025TabPFN,Zhang2025LimiX}. Specifically, LimiX supports retrieval-style conditional inference, making it highly suitable for grid forecasting tasks dominated by diverse structured observations \cite{Zhang2025LimiX}. While incorporating physically meaningful dependencies or structural reasoning has been shown to improve predictive robustness under environmental shifts \cite{Luo2021PCLSTM,Luo2026PMWC,Ma2026CausalFM}, the practical application of these large structured-data models to PV forecasting remains largely unexplored, particularly under latency-sensitive cloud-edge deployment constraints \cite{Zhang2025LimiX,Ma2026CausalFM,Li2022EdgeCloud}.

\paragraph{Cloud-Edge Collaborative Inference.}
Edge intelligence has become an important paradigm for smart grids because many sensing, forecasting, and control tasks demand low-latency local processing and cannot rely solely on centralized cloud execution \cite{Gooi2023,Li2022EdgeCloud,Qiao2026FORLER,huang2024multi,akram2025comprehensive}. Existing edge-AI studies show that collaborative inference between cloud and edge can effectively reduce end-to-end latency through model partitioning, adaptive execution, and early exit, thereby outperforming naive cloud-only deployments in resource-constrained settings \cite{Li2020Edgent,Ren2023Survey}. For large models, CE-CoLLM further demonstrates that direct cloud-edge collaboration is often bottlenecked by contextual-information transmission, and that adaptive switching between standalone edge inference and cloud-assisted inference is necessary to balance efficiency and accuracy \cite{Jin2024CECoLLM}. Meanwhile, even PV-specific distributed forecasting studies, such as decentralized forecasting enhanced by large weather models via spatio-temporal knowledge distillation, mainly improve predictive quality rather than addressing online selective invocation and system-level latency--communication trade-offs \cite{he2025stkdpv}. 
% Beyond inference acceleration itself, related studies on distributed decision-making and system optimization also highlight the importance of communication-aware coordination in networked intelligent systems: PoPeC investigates PAoI-centric task offloading with priority over unreliable channels, while FOVA and FORLER study offline federated reinforcement learning under mixed-quality data and actor-rectified Q-ensemble designs, respectively \cite{Qiao2026FOVA,Qiao2026FORLER}. 

These works provide useful insights into delay-aware offloading and robust distributed policy learning, but they do not address the forecasting-specific problem of selectively invoking large predictive models under sample-varying uncertainty and weather-induced non-stationarity. Therefore, a forecasting-oriented collaborative framework must jointly address condition screening, selective cloud invocation, communication cost, and confidence-aware fusion, which is not sufficiently handled by existing literature \cite{Li2022EdgeCloud,Ren2023Survey,Jin2024CECoLLM,Zang2024}.

% =========================================
% System Model and Problem Formulation
% =========================================
\section{System Model and Problem Formulation}
\label{sec:system_model}

In this section, we establish the system model and formulate an optimization problem for the cloud-edge PV forecasting framework. We first define the observations and inference modes in the multi-site cloud-edge system. We then model cloud delay, communication cost, and mode-dependent resource consumption, formulate the long-term stochastic optimization problem, and finally introduce virtual queues that transform the long-term constraints into an online-control form for the Lyapunov-guided method. As shown in Fig.~\ref{fig:system_model}, the system includes multiple PV sites with edge nodes, a centralized cloud, and wireless uplink/downlink links for query transmission and context feedback. Table \ref{tab:nomenclature} shows the main symbols in this paper.

\begin{figure}[htbp]
\centering
\includegraphics[width=0.8\linewidth]{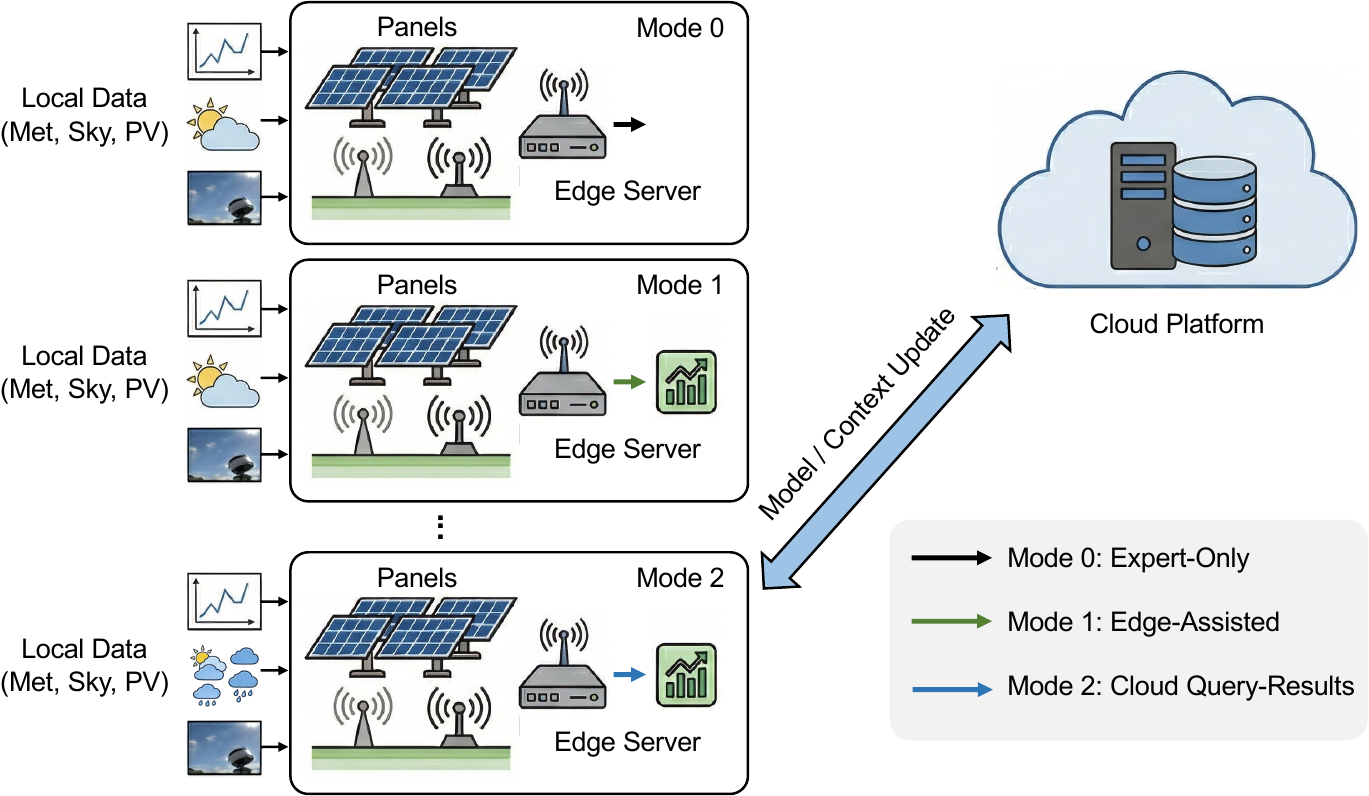}
\caption{System model of the proposed cloud-edge PV forecasting.}
\label{fig:system_model}
\vspace{-10pt}
\end{figure}
% Appendix \ref{app:notation}.

\begin{table}[htbp]
    \centering
    \footnotesize
    \caption{Summary of key notations.}
    \label{tab:nomenclature}
    \begin{tabular}{|c|l|}
        \hline
        \textbf{Symbol} & \textbf{Description} \\
        \hline
        $\mathcal{N}$, $N$ & Set of PV sites and the total number of sites \\
        \hline
        $t$, $i$ & Time slot index and edge node index \\
        \hline
        $\mathbf{s}_{i,t}$ & Raw local state of node $i$ at time $t$ \\
        \hline
        $\mathbf{x}_{i,t}^{\mathrm{hist}}$ & Local historical observation window \\
        \hline
        $\mathbf{m}_{i,t}$, $\mathbf{v}_{i,t}$ & Meteorological variables \\
        \hline
        $\mathbf{X}_{i,t}^{\mathrm{loc}}$ & Unified local model input \\
        \hline
        $\mathbf{y}_{i,t}^{*}$ & Future $H$-step PV output vector  \\
        \hline
        $\mathcal{Y}$, $H$ &  Bounded prediction domain and  horizon\\
        \hline
        $\mathcal{A}$, $a_{i,t}$ & Action space and the selected inference mode  \\
        \hline
        $\hat{\mathbf{y}}_{i,t}^{e}, \hat{\mathbf{y}}_{i,t}^{s}, \hat{\mathbf{y}}_{i,t}^{c}$ & \begin{tabular}{@{}l@{}}Forecast candidates from expert, small, and \\ cloud branches\end{tabular} \\
        \hline
        $\mathbf{q}_{i,t}$ & Compact query formed by the edge node \\
        \hline
        $\mathcal{S}_{i,t}$ & Support set returned by the cloud retriever \\
        \hline
        $\mathbf{z}_{i,t}$ & Cloud context summarizing the support set \\
        \hline
        $\mathcal{M}_{a}$ & Active branch set for inference mode $a$ \\
        \hline
        $\mathbf{w}_{i,t}^{(a)}$ & Mode-specific fusion weight vector \\
        \hline
        $\ell_{i,t}^{(a)}$ & Action-wise realized forecasting loss \\
        \hline
        $\rho(t)$ & Instantaneous cloud-request ratio  \\
        \hline
        $\tau_{i}^{(a)}$, $c_{i}^{(a)}$ & \begin{tabular}{@{}l@{}}Mode-dependent processing latency and  \\communication cost\end{tabular} \\
        \hline
        $\bar{L}, \bar{\tau}, \bar{c}, \bar{\rho}$ & \begin{tabular}{@{}l@{}}Long-term average loss, latency, \\ communication, and cloud usage\end{tabular} \\
        \hline
        $Q_{\tau}(t), Q_{c}(t), Q_{\rho}(t)$ & \begin{tabular}{@{}l@{}}Virtual queues for latency, communication, \\ and cloud usage constraints\end{tabular} \\
        \hline
    \end{tabular}
    \vspace{-9pt}
\end{table}

\subsection{Multi-Site Cloud-Edge Forecasting System}
We consider a cloud-edge collaborative PV forecasting system with a set of PV sites
\(
\mathcal{N}=\{1,\ldots,N\}
\).
Time is slotted and indexed by \(t=0,1,2,\ldots\). At each slot, edge node \(i\in\mathcal{N}\) observes a raw local state
\begin{equation}
\mathbf{s}_{i,t}
=
\left(
\mathbf{x}_{i,t}^{\mathrm{hist}},
\mathbf{m}_{i,t},
\mathbf{v}_{i,t}
\right),
\end{equation}
where \(\mathbf{x}_{i,t}^{\mathrm{hist}}\) is the local historical observation window, \(\mathbf{m}_{i,t}\) denotes meteorological variables, and \(\mathbf{v}_{i,t}\) denotes optional visual information such as sky-image embeddings. These modalities are preprocessed into a unified local model input
\begin{equation}
\mathbf{X}_{i,t}^{\mathrm{loc}}
=
\Phi_i^{\mathrm{loc}}\left(\mathbf{s}_{i,t}\right),
\end{equation}
which is used by all downstream predictors 
\blue{ and retrieval modules}. Meteorological and visual cues are therefore incorporated whenever available, while the subsequent notation only needs the compact input \(\mathbf{X}_{i,t}^{\mathrm{loc}}\).
PV outputs are normalized by the site capacity. Accordingly,
\(
y_{i,t+h}\in[0,1]
\)
for each horizon step \(h\), and the bounded prediction domain is
\(
\mathcal{Y}\triangleq[0,1]^H
\).
The forecasting target is the future \(H\)-step PV output vector
\begin{equation}
\mathbf{y}_{i,t}^{*}
=
\left[
y_{i,t+1},\ldots,y_{i,t+H}
\right]^\top
\in\mathcal{Y}.
\end{equation}
The full target \(\mathbf{y}_{i,t}^{*}\) becomes observable only after slot \(t+H\). The realized forecasting loss for slot \(t\) is therefore revealed with delay.
At each time slot, the scheduler selects an inference mode
\(
a_{i,t} \in \mathcal{A} \triangleq \{0, 1, 2\},
\)
where \(a_{i,t}=0\), \(a_{i,t}=1\), and \(a_{i,t}=2\) denote the expert-only mode, the edge-only fusion mode, and the cloud-assisted fusion mode, respectively.
The expert candidate \(\hat{\mathbf{y}}_{i,t}^{e}\in\mathcal{Y}\) is always computed locally. The small-branch candidate \(\hat{\mathbf{y}}_{i,t}^{s}\in\mathcal{Y}\) is computed on demand only when \(a_{i,t}\in\{1,2\}\) \blue{, and the cloud-assisted candidate \(\hat{\mathbf{y}}_{i,t}^{c}\in\mathcal{Y}\) is computed on demand only when \(a_{i,t}=2\).}
 \blue{The cloud maintains a global historical case base \(\mathcal{D}^{\mathrm{cld}}\). When \(a_{i,t}=2\), the cloud-assisted branch is explicitly decoupled into a retrieval stage and a conditional prediction stage. Specifically, the edge first forms a compact query}
\begin{equation}
 \blue{
\mathbf{q}_{i,t}
=
h\left(\mathbf{X}_{i,t}^{\mathrm{loc}}\right),
}
\label{eq:query_sys}
\end{equation}
 \blue{the cloud retriever returns a support set}
\begin{equation}
 \blue{
\mathcal{S}_{i,t}
=
F^{\mathrm{cld}}\left(
\mathbf{q}_{i,t},
\mathcal{D}^{\mathrm{cld}},
K
\right),
}
\label{eq:retrieval_sys}
\end{equation}
 \blue{which is summarized into a cloud context}
\begin{equation}
 \blue{
\mathbf{z}_{i,t}
=
\psi\left(\mathcal{S}_{i,t}\right),
}
\label{eq:context_sys}
\end{equation}
 \blue{and the cloud-assisted candidate is then produced as}
\begin{equation}
 \blue{
\hat{\mathbf{y}}_{i,t}^{c}
=
f^c\left(
\mathbf{X}_{i,t}^{\mathrm{loc}},
\mathbf{z}_{i,t}
\right)
\in\mathcal{Y}.
}
\label{eq:cloud_candidate_sys}
\end{equation}
 \blue{\(F^{\mathrm{cld}}\) is instantiated as a cloud-scale retrieval model, whereas \(f^c\) is instantiated as a lightweight conditional regressor with much weaker dependence on model size and optional parameter sharing with the local small model. At the outer control level, this internal decoupling only refines the construction of branch \(c\); the action space, the active branch sets, and the fusion structure remain unchanged.}
 \blue{To formalize this interpretation, we view the retrieved support set as a proxy for an unobserved weather-site regime and state the following:}
\begin{proposition}
\label{prop:latent_decomp}
% [ \blue{Latent-Regime Decomposition of the Cloud-Assisted Branch}]
{Denote \(\xi_{i,t}\) as a latent weather-site regime variable. If}
% \begin{equation}
$\mathbf{y}_{i,t}^{*}
\perp\perp
\mathcal{S}_{i,t}
\mid
\bigl(
\mathbf{X}_{i,t}^{\mathrm{loc}},
\xi_{i,t}
\bigr),$
% \label{eq:latent_ci_sys}
% \end{equation}
{then the Bayes-optimal cloud-assisted predictor admits the decomposition}
$p(
\mathbf{y}_{i,t}^{*}
\mid
\mathbf{X}_{i,t}^{\mathrm{loc}},
\mathcal{S}_{i,t}
)
=
\int
p(
\mathbf{y}_{i,t}^{*}
\mid
\mathbf{X}_{i,t}^{\mathrm{loc}},
\xi
)
p\left(
\xi
\mid
\mathbf{X}_{i,t}^{\mathrm{loc}},
\mathcal{S}_{i,t}
\right)
d\xi.$
% \begin{equation}
% p\left(
% \mathbf{y}_{i,t}^{*}
% \mid
% \mathbf{X}_{i,t}^{\mathrm{loc}},
% \mathcal{S}_{i,t}
% \right)
% =
% \int
% p\left(
% \mathbf{y}_{i,t}^{*}
% \mid
% \mathbf{X}_{i,t}^{\mathrm{loc}},
% \xi
% \right)
% p\left(
% \xi
% \mid
% \mathbf{X}_{i,t}^{\mathrm{loc}},
% \mathcal{S}_{i,t}
% \right)
% \,d\xi.
% \nonumber
% \label{eq:latent_decomp_sys}
% \end{equation}
\end{proposition}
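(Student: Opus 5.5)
The plan is to derive the decomposition from the law of total probability, marginalizing over the latent regime $\xi_{i,t}$, and then apply the conditional-independence hypothesis once. Throughout, I will assume the joint law of $(\mathbf{y}_{i,t}^{*},\mathbf{X}_{i,t}^{\mathrm{loc}},\mathcal{S}_{i,t},\xi_{i,t})$ admits densities (or regular conditional distributions) with respect to suitable dominating measures. I will also assume the conditioning event has positive density, i.e.\ $p(\mathbf{X}_{i,t}^{\mathrm{loc}},\mathcal{S}_{i,t})>0$. If $\xi$ is discrete, the integral is read as a sum.

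\textbf{Step 1 (marginalization).} Marginalizing the joint conditional density over $\xi$ gives
$p(\mathbf{y}_{i,t}^{*}\mid\mathbf{X}_{i,t}^{\mathrm{loc}},\mathcal{S}_{i,t})=\int p(\mathbf{y}_{i,t}^{*},\xi\mid\mathbf{X}_{i,t}^{\mathrm{loc}},\mathcal{S}_{i,t})\,d\xi$.

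\textbf{Step 2 (chain rule).} Factor the integrand as
$p(\mathbf{y}_{i,t}^{*}\mid\mathbf{X}_{i,t}^{\mathrm{loc}},\mathcal{S}_{i,t},\xi)\,p(\xi\mid\mathbf{X}_{i,t}^{\mathrm{loc}},\mathcal{S}_{i,t})$.

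\textbf{Step 3 (apply the hypothesis).} The assumption $\mathbf{y}_{i,t}^{*}\perp\perp\mathcal{S}_{i,t}\mid(\mathbf{X}_{i,t}^{\mathrm{loc}},\xi_{i,t})$ means
$p(\mathbf{y}_{i,t}^{*}\mid\mathbf{X}_{i,t}^{\mathrm{loc}},\mathcal{S}_{i,t},\xi)=p(\mathbf{y}_{i,t}^{*}\mid\mathbf{X}_{i,t}^{\mathrm{loc}},\xi)$.
Substituting this into Step 2 yields the stated formula.

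\textbf{Step 4 (link to the Bayes-optimal predictor).} The phrase ``Bayes-optimal cloud-assisted predictor'' refers to the posterior predictive $p(\mathbf{y}_{i,t}^{*}\mid\mathbf{X}_{i,t}^{\mathrm{loc}},\mathcal{S}_{i,t})$. Any Bayes action depends on the data only through this law; for example, under squared loss it is the posterior mean, which equals $\int \mathbb{E}[\mathbf{y}_{i,t}^{*}\mid\mathbf{X}_{i,t}^{\mathrm{loc}},\xi]\,p(\xi\mid\mathbf{X}_{i,t}^{\mathrm{loc}},\mathcal{S}_{i,t})\,d\xi$. Swapping expectation and integral here is justified by Fubini, since $\mathcal{Y}=[0,1]^H$ is bounded. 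This shows $\mathcal{S}_{i,t}$ influences the prediction only through the regime posterior. That matches the role of $\mathbf{z}_{i,t}=\psi(\mathcal{S}_{i,t})$ as a summary statistic feeding $f^c$.

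\textbf{Main obstacle.} The only delicate point is measure-theoretic. The equality in Step 3 holds only almost surely with respect to the law of $(\mathbf{X}_{i,t}^{\mathrm{loc}},\mathcal{S}_{i,t},\xi_{i,t})$. To handle this, I will state the result for regular versions of the conditional distributions. I will then note that two versions differing on a null set of $\xi$ values give the same integral against $p(\xi\mid\cdot)$, so the decomposition holds almost surely. Since the paper works with densities, I will state this caveat briefly rather than develop it in full.
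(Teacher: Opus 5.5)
Your proof is correct and follows the paper's argument. The appendix proof likewise uses the law of total probability to write $p(\mathbf{y}^{*}\mid\mathbf{X}^{\mathrm{loc}},\mathcal{S})=\int p(\mathbf{y}^{*}\mid\mathbf{X}^{\mathrm{loc}},\mathcal{S},\xi)\,p(\xi\mid\mathbf{X}^{\mathrm{loc}},\mathcal{S})\,d\xi$, and then drops $\mathcal{S}$ from the first factor by the conditional-independence hypothesis; your Step 4 and the regular-version caveat are extras the paper omits and do not change the argument.
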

% \noindent
\blue{Thus, the cloud-assisted branch can be interpreted as a two-stage inference pipeline: retrieval infers a posterior regime summary from \(\mathcal{S}_{i,t}\), and the conditional predictor maps \((\mathbf{X}_{i,t}^{\mathrm{loc}},\mathbf{z}_{i,t})\) to the forecast. This provides a probabilistic justification for using a large retrieval model together with a lightweight prediction model.}
\blue{A short proof based on the law of total probability is deferred to Appendix~\ref{app:system_model_aux}.}
The corresponding active branch sets are
\begin{equation}
\mathcal{M}_{0}=\{e\},
\qquad
\mathcal{M}_{1}=\{e,s\},
\qquad
\mathcal{M}_{2}=\{e,s,c\}.
\end{equation}
% For each mode \(a\in\mathcal{A}\), 
The mode-specific fusion weight vector is denoted by
\(
\mathbf{w}_{i,t}^{(a)}\in\Delta_{|\mathcal{M}_{a}|},\forall a\in\mathcal{A},
\)
where \(\Delta_d\) is the \(d\)-dimensional probability simplex. Once action \(a_{i,t}\) is selected, the final forecast is
\begin{equation}
\hat{\mathbf{y}}_{i,t}
=
\sum_{m\in\mathcal{M}_{a_{i,t}}}
w_{i,m}^{(a_{i,t})}(t)\hat{\mathbf{y}}_{i,t}^{m},
\qquad
\mathbf{w}_{i,t}^{(a_{i,t})}\in\Delta_{|\mathcal{M}_{a_{i,t}}|}.
\label{eq:fused_output}
\end{equation}
For brevity, when the active mode is clear from the context, we write
\(
\mathbf{w}_{i,t}\equiv \mathbf{w}_{i,t}^{(a_{i,t})}.
\)
A multi-horizon forecasting loss is used. For any action \(a\in\mathcal{A}\), the action-wise realized loss is
\begin{equation}
\ell_{i,t}^{(a)}(\mathbf{w})
=
\ell
\left(
\mathbf{y}_{i,t}^{*},
\sum_{m\in\mathcal{M}_{a}}w_m\hat{\mathbf{y}}_{i,t}^{m}
\right),
\quad
\mathbf{w}\in\Delta_{|\mathcal{M}_{a}|},
\label{eq:inst_loss_actionwise}
\end{equation}
and the realized loss under the selected action is
\begin{equation}
\ell_{i,t}\big(\mathbf{w}_{i,t},a_{i,t}\big)
=
\ell_{i,t}^{(a_{i,t})}\big(\mathbf{w}_{i,t}^{(a_{i,t})}\big).
\label{eq:inst_loss}
\end{equation}
Typical choices of \(\ell(\cdot,\cdot)\) include MAE, weighted MAE, Huber loss, and squared loss on the bounded domain \(\mathcal{Y}\). These loss functions are standard on bounded prediction domains and admit finite Lipschitz constants on \(\mathcal{Y}\). Since the fused output in Eq.~\eqref{eq:fused_output} is affine in \(\mathbf{w}\), the action-wise fusion step in Eq.~\eqref{eq:inst_loss_actionwise} is a convex optimization over the simplex for the loss families considered here. A derivation is in Appendix~\ref{app:system_model_aux}.

\subsection{Mean-Field Cloud Delay and Resource-Consumption} % Model
 \blue{When edge nodes simultaneously request cloud-assisted retrieval, they share a common congestion-dependent cloud waiting component.} The instantaneous cloud-request ratio is
\begin{equation}
\rho(t)
=
\frac{1}{N}\sum_{i=1}^{N}\mathbb{I}\{a_{i,t}=2\}
\in[0,1].
\label{eq:rho_def}
\end{equation}
A mean-field delay model is adopted, in which the cloud waiting time is described by a deterministic congestion curve
\begin{equation}
d^{\mathrm{cld}}(t)=\phi\big(\rho(t)\big).
\label{eq:cloud_delay}
\end{equation}
A continuous nondecreasing choice of \(\phi:[0,1]\to\mathbb{R}_{+}\) captures the fact that heavier cloud contention leads to longer waiting time. This approximation is well motivated when \(N\) is large and the influence of an individual node on the aggregate cloud load is \(O(1/N)\). The same model can also be viewed as the continuum limit of a congestion game in which each edge node decides whether to request cloud service and incurs a congestion-dependent waiting cost.
The quantity \(\tau_{i}^{e}\) denotes the always-on expert-side latency, which includes the expert prediction itself together with the lightweight local screening and routing-score estimation computations required before routing. If the scheduler activates mode \(1\) or \(2\), the small branch is then executed on demand. If mode \(2\) is activated, the  \blue{cloud retrieval/communication} pipeline is launched in parallel with the on-demand small branch. The resulting mode-dependent latency of node \(i\) is
\begin{align}
\tau_{i}^{(0)}
&=
\tau_{i}^{e},\\
\tau_{i}^{(1)}
&=
\tau_{i}^{e}
+
\tau_{i}^{s}
+
\tau_{i}^{f},\\
% \label{eq:latency_mode2}
\tau_{i}^{(2)}\big(\rho(t)\big)
&=
\tau_{i}^{e}
+
\tau_{i}^{f}\\
+
\max&\left\{
\tau_{i}^{s},
\,
\tau_{i}^{\mathrm{up}}
+
\phi\big(\rho(t)\big)
+
\tau^{\mathrm{cld}}
+
\tau_{i}^{\mathrm{down}}
\right\},\nonumber
\end{align}
where  \blue{\(\tau_{i}^{s}\) denotes the additional small-branch latency, \(\tau_{i}^{f}\) denotes the local post-branch processing latency including fusion and, under mode \(2\), the lightweight conditional prediction after context download,} while  \blue{\(\tau_{i}^{\mathrm{up}}\), \(\tau^{\mathrm{cld}}\), and \(\tau_{i}^{\mathrm{down}}\) denote the query-construction/upload latency, cloud-side retrieval/context-construction latency, and context download latency, respectively.}
The communication cost is
$c_{i}^{(0)}=c_{i}^{(1)}=0,
c_{i}^{(2)}=\kappa_{i},$
where \(\kappa_i\) denotes the per-request uplink/downlink traffic volume of node \(i\) \blue{ for transmitting the compact query and the retrieved context}.
The normalized output range, together with finite computation and communication budgets, yields uniformly bounded one-slot quantities. Accordingly, finite constants \(L_{\mathrm{sup}},\tau_{\mathrm{sup}},c_{\mathrm{sup}}<\infty\) satisfy
\begin{equation}
0\le \ell_{i,t}^{(a)}(\mathbf{w})\le L_{\mathrm{sup}},
0\le \tau_i^{(a)}(\rho)\le \tau_{\mathrm{sup}},
0\le c_i^{(a)}\le c_{\mathrm{sup}},
\label{eq:bounded_slot}
\end{equation}
for all \(i\), \(t\), \(a\in\mathcal{A}\), \(\rho\in[0,1]\), and feasible \(\mathbf{w}\).

% For clarity, we summarize key notations in 

\subsection{Long-Term Stochastic Optimization Formulation}
The long-term average forecasting loss is defined as
\begin{equation}
\bar{L}
=
\limsup_{T\to\infty}
\frac{1}{T}
\sum_{t=0}^{T-1}
\mathbb{E}
\left[
\frac{1}{N}\sum_{i=1}^{N}
\ell_{i,t}\big(\mathbf{w}_{i,t},a_{i,t}\big)
\right].
\end{equation}
Similarly, the average latency, communication cost, and cloud-request ratio are defined as
\begin{equation}
\bar{\tau}
=
\limsup_{T\to\infty}
\frac{1}{T}
\sum_{t=0}^{T-1}
\mathbb{E}
\left[
\frac{1}{N}\sum_{i=1}^{N}
\tau_{i}^{(a_{i,t})}\big(\rho(t)\big)
\right],
\end{equation}
\begin{equation}
\bar{c}
=
\limsup_{T\to\infty}
\frac{1}{T}
\sum_{t=0}^{T-1}
\mathbb{E}
\left[
\frac{1}{N}\sum_{i=1}^{N}
c_{i}^{(a_{i,t})}
\right],
\end{equation}
\begin{equation}
\bar{\rho}
=
\limsup_{T\to\infty}
\frac{1}{T}
\sum_{t=0}^{T-1}
\mathbb{E}\big[\rho(t)\big].
\end{equation}
The objective is to minimize the long-term average forecasting loss subject to average system constraints:
\begin{subequations}
\label{prob:P1}
\begin{align}
\textbf{P1:}\qquad
\min_{\{a_{i,t},\mathbf{w}_{i,t}^{(a_{i,t})}\}}
\quad &
\bar{L}
\\
\text{s.t.}\quad
&
\bar{\tau}\le \tau_{\max},
\\
&
\bar{c}\le c_{\max},
\\
&
\bar{\rho}\le \rho_{\max},
\\
& 0<\rho_{\max}\le 1,
\\
&
a_{i,t}\in\{0,1,2\},
\\
&
\mathbf{w}_{i,t}^{(a_{i,t})}\in\Delta_{|\mathcal{M}_{a_{i,t}}|},\quad
\forall i,t.
\end{align}
\end{subequations}
Problem~\eqref{prob:P1} is challenging for 3 reasons: (i) the mode variable \(a_{i,t}\) is discrete; (ii) the cloud-assisted delay couples all edge nodes through the mean-field term \(\rho(t)\); and (iii) the current forecasting difficulty is stochastic and the true loss is observed only after delayed label revelation.
The optimization problem above is defined in terms of the realized losses \(\ell_{i,t}^{(a)}\). Since \(\mathbf{y}_{i,t}^{*}\) is not available at decision time, the online controller developed in Section~\ref{sec:method} will use calibrated conditional surrogate losses \(\widehat{\ell}_{i,t}^{(a)}\) to approximate these realized quantities.

\subsection{Virtual-Queue for Long-Term Constraints}
To transform \textbf{P1} into an online control problem, virtual queues are introduced for the average latency, communication, and cloud-usage constraints:
\begin{equation}
Q_{\tau}(t+1)
=
\left[
Q_{\tau}(t)
+
\frac{1}{N}\sum_{i=1}^{N}
\tau_{i}^{(a_{i,t})}\big(\rho(t)\big)
-
\tau_{\max}
\right]^{+}.
\label{eq:q_tau}
\end{equation}
\begin{equation}
Q_{c}(t+1)
=
\left[
Q_{c}(t)
+
\frac{1}{N}\sum_{i=1}^{N}
c_{i}^{(a_{i,t})}
-
c_{\max}
\right]^{+}.
\label{eq:q_c}
\end{equation}
\begin{equation}
Q_{\rho}(t+1)
=
\left[
Q_{\rho}(t)
+
\rho(t)
-
\rho_{\max}
\right]^{+}.
\label{eq:q_rho}
\end{equation}
The bounded one-slot quantities in Eq.~\eqref{eq:bounded_slot} ensure uniformly bounded increments of these virtual queues. Standard Lyapunov arguments further show that mean-rate stability of \(Q_{\tau}(t)\), \(Q_{c}(t)\), and \(Q_{\rho}(t)\),
\begin{equation}
\lim_{T\to\infty}
\frac{\mathbb{E}[Q_{g}(T)]}{T}=0,
\qquad
g\in\{\tau,c,\rho\},
\label{eq:mean_rate_stability}
\end{equation}
guarantees the corresponding time-average constraints in Eq.~\eqref{prob:P1}. The derivation is reported in Appendix~\ref{app:system_model_aux}.
The quadratic Lyapunov function is chosen as
\begin{equation}
\mathcal{L}\big(\mathbf{Q}(t)\big)
=
\frac{1}{2}
\left(
Q_{\tau}^{2}(t)+Q_{c}^{2}(t)+Q_{\rho}^{2}(t)
\right),
\end{equation}
where \(\mathbf{Q}(t)\triangleq(Q_{\tau}(t),Q_{c}(t),Q_{\rho}(t))\). The conditional drift is
\begin{equation}
\Delta(t)
=
\mathbb{E}
\left[
\mathcal{L}\big(\mathbf{Q}(t+1)\big)-\mathcal{L}\big(\mathbf{Q}(t)\big)
\mid
\mathbf{Q}(t)
\right].
\end{equation}

\section{Design of \emph{CAPE}}
\label{sec:method}

% \subsection{Overview}
% In this section, we propose a cloud-edge forecasting framework \emph{CAPE}. As illustrated in Fig.~\ref{fig:technical_framework}, a local expert serves as the default predictor, difficult samples are selectively escalated to the edge-small or cloud-assisted branch, and the final forecast is produced by fusion of the activated predictors with delayed online updates. Accordingly, we first introduce local screening and routing-score computation, then develop the Lyapunov-guided routing policy, next present the fusion and delayed-update mechanism, and finally establish the long-term performance guarantee.
In this section, we propose a cloud-edge forecasting framework \emph{CAPE}. As illustrated in Fig.~\ref{fig:technical_framework}, its workflow follows four sequential modules. First, a local screening mechanism evaluates inputs, enabling a local expert to yield default predictions while computing routing scores to identify hard samples. Then, a routing decision module dynamically offloads these challenging instances to edge or cloud predictors on demand, guided by Lyapunov optimization to handle mean-field cloud congestion. Next, a prediction and fusion module aggregates outputs from the selected predictors to generate the final forecasting result. Finally, an update and guarantee module leverages delayed labels to drive online model updates, which theoretically ensures long-term forecasting performance and queue stability across the system.

\begin{figure}[htbp]
\centering
\scalebox{1}[1]{\includegraphics[width=0.98\linewidth]{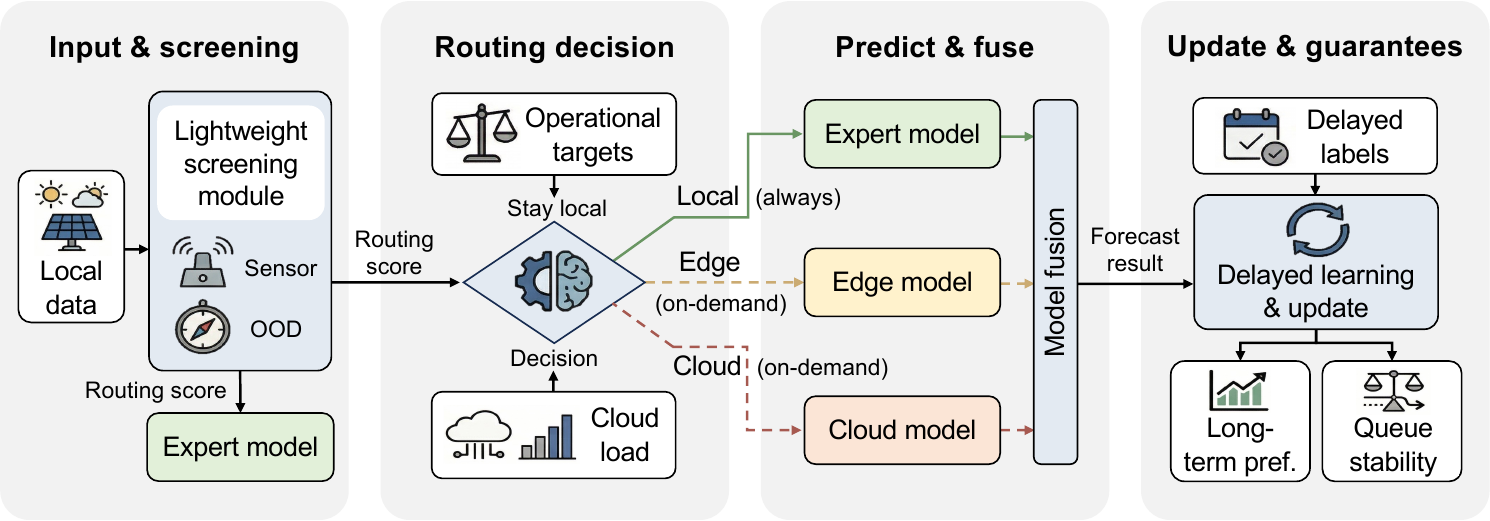}}
\caption{Overview of \emph{CAPE} framework.}
\label{fig:technical_framework}
\vspace{-10pt}
\end{figure}

\subsection{Local Screening and Routing-Score Computation}
At slot \(t\), node \(i\) first computes the default expert prediction
\begin{equation}
\hat{\mathbf{y}}_{i,t}^{e}
=
f_i^e\left(\mathbf{X}_{i,t}^{\mathrm{loc}}\right).
\label{eq:expert_branch_main}
\end{equation}
The routing stage uses only lightweight screening signals that are available before any optional branch escalation. Specifically, the edge forms
\begin{equation}
\boldsymbol{\varphi}_{i,t}
=
\left[
u_{i,t},\,o_{i,t},\,\mu_{i,t},\,d_{i,t}^{\mathrm{scr}}
\right]^\top,
\label{eq:screen_feature_main}
\end{equation}
where \(u_{i,t}\) denotes predictive uncertainty, \(o_{i,t}\) denotes an out-of-distribution score, \(\mu_{i,t}\) denotes weather mutation intensity, and \(d_{i,t}^{\mathrm{scr}}\) denotes a cheap screening disagreement score available before routing. The scalar routing score is
\begin{equation}
r_{i,t}
=
\sigma\left(
\boldsymbol{\beta}^{\top}\boldsymbol{\varphi}_{i,t}
\right)\in(0,1).
\label{eq:routing_score_main}
\end{equation}
\red{In implementation, \(u_{i,t}\) is the predictive variance of the small model under stochastic forward passes, \(o_{i,t}\) is a Mahalanobis OOD score computed from recent in-domain features, \(\mu_{i,t}\) is the normalized short-window variation of meteorological covariates, and \(d_{i,t}^{\mathrm{scr}}\) is the absolute disagreement between the expert and small-model forecasts. We use \(\tilde r_{i,t}=\alpha r_{i,t}\) for gain fitting and CDF maintenance, and fit \(\boldsymbol{\beta}\) offline by logistic regression on a full-branch replay set.}
All quantities in this part are computed locally, and their computational overhead is absorbed into the expert-side execution budget. To avoid unnecessary local overhead in expert-only mode, the edge small model\blue{, the cloud retriever invocation, and the lightweight conditional regressor} are generated on demand only after the routing action has been determined.

\begin{algorithm}[htbp]
\caption{Local Screening and Routing-Score Estimation}
\label{alg:local_screen_main}
\begin{algorithmic}[1]
\REQUIRE Local observation window \(\mathbf{X}_{i,t}^{\mathrm{loc}}\)
\STATE Compute \(\hat{\mathbf{y}}_{i,t}^{e}\) by Eq.~\eqref{eq:expert_branch_main}
\STATE Estimate \(u_{i,t},o_{i,t},\mu_{i,t},d_{i,t}^{\mathrm{scr}}\) and form \(\boldsymbol{\varphi}_{i,t}\) by Eq.~\eqref{eq:screen_feature_main}
\STATE Compute routing score \(r_{i,t}\) by Eq.~\eqref{eq:routing_score_main}
\RETURN \(\hat{\mathbf{y}}_{i,t}^{e},\boldsymbol{\varphi}_{i,t},r_{i,t}\)
\end{algorithmic}
\end{algorithm}

Algorithm~\ref{alg:local_screen_main} is executed entirely on the edge and does not require cloud communication. Its role is to produce the default expert forecast together with a compact screening summary for the routing stage.

\subsection{Lyapunov-Guided Routing under Mean-Field Cloud Congestion}
For each \(a\in\{0,1,2\}\), the routing stage maintains the conditional surrogate loss
\begin{equation}
\widehat{\ell}_{i,t}^{(a)}
=
\widehat{\mathbb{E}}
\left[
\ell_{i,t}^{(a)}
\,\middle|\,
\mathcal{H}_{i,t}
\right],
\qquad a\in\{0,1,2\},
\label{eq:surrogate_loss_main}
\end{equation}
where \(\mathcal{H}_{i,t}\) denotes the pre-decision information set generated by the local observation window, the screening outputs, the current queues, and the currently available online-fusion statistics; \(\ell_{i,t}^{(a)}\) denotes the forecasting loss that would be incurred if mode \(a\) were activated at slot \(t\). These surrogates are calibrated from delayed labeled data. \blue{For the cloud-assisted mode, \(\widehat{\ell}_{i,t}^{(2)}\) is calibrated from the realized loss of the factorized retrieval--prediction branch, so the outer routing rule remains unchanged after the internal decomposition of mode \(2\).}
\red{Because only the selected mode is executed online, \(\widehat{\ell}_{i,t}^{(a)}\), \(G_{1,i}\), and \(G_{2,i}\) are initialized from an offline full-branch replay set in which modes \(0,1,2\) are all evaluated for each sample, thereby avoiding missing counterfactual labels during calibration. In practice, \(G_{1,i}\) and \(G_{2,i}\) are fitted by isotonic regression on the replayed loss gaps.}
We model the expected benefit of using more expensive modes through node-calibrated monotone gain functions:
\begin{equation}
G_{1,i}(r)
=
\mathbb{E}
\left[
\widehat{\ell}_{i,t}^{(0)}-\widehat{\ell}_{i,t}^{(1)}
\,\middle|\,
r_{i,t}=r
\right],
\label{eq:G1_main}
\end{equation}
\begin{equation}
G_{2,i}(r)
=
\mathbb{E}
\left[
\widehat{\ell}_{i,t}^{(1)}-\widehat{\ell}_{i,t}^{(2)}
\,\middle|\,
r_{i,t}=r
\right].
\label{eq:G2_main}
\end{equation}
Continuous, nondecreasing gain curves on \([0,1]\) are natural here, since higher routing scores should not reduce the value of escalation in expectation. With this regularity in place,
\begin{equation}
\kappa_{i,t}^{(1)}
=
Q_{\tau}(t)\big(\tau_i^{(1)}-\tau_i^{(0)}\big),
\label{eq:kappa1_main}
\end{equation}
and
\begin{equation}
\kappa_{i,t}^{(2)}\big(\rho\big)
=
Q_{\tau}(t)\big(\tau_i^{(2)}(\rho)-\tau_i^{(1)}\big)
+
Q_c(t)\kappa_i
+
Q_{\rho}(t),
\label{eq:kappa2_main}
\end{equation}
where \(\rho\in[0,1]\) is a cloud-load estimate. The relative routing indices are
\begin{equation}
J_{i,t}^{(0)} = 0,
\label{eq:J0_main}
\end{equation}
\begin{equation}
J_{i,t}^{(1)}(r)
=
\frac{\kappa_{i,t}^{(1)}}{V}
-
G_{1,i}(r),
\label{eq:J1_main}
\end{equation}
\begin{equation}
J_{i,t}^{(2)}(r,\rho)
=
\frac{\kappa_{i,t}^{(1)}+\kappa_{i,t}^{(2)}(\rho)}{V}
-
G_{1,i}(r)-G_{2,i}(r).
\label{eq:J2_main}
\end{equation}
\blue{Therefore, the internal factorization of the cloud-assisted branch does not alter the outer routing structure in Eq.~\eqref{eq:J2_main}, it only refines how mode \(2\) is realized after selection.}

\begin{theorem}
% [Node-wise routing-score threshold routing]
\label{thm:routing_main}
For fixed queue state \((Q_{\tau}(t),Q_c(t),Q_{\rho}(t))\) and cloud-load estimate \(\rho\in[0,1]\), the action that minimizes the conditional surrogate routing objective is
\begin{equation}
a_{i,t}
=
\arg\min_{a\in\{0,1,2\}}
J_{i,t}^{(a)}.
\label{eq:routing_argmin_main}
\end{equation}
Moreover, for each node \(i\), the three pairwise comparisons
\[
J_{i,t}^{(1)}(r)\le J_{i,t}^{(0)},\quad
J_{i,t}^{(2)}(r,\rho)\le J_{i,t}^{(0)},\quad
J_{i,t}^{(2)}(r,\rho)\le J_{i,t}^{(1)}(r)
\]
are each triggered by a scalar threshold in \(r\). Hence, the optimal action is determined by a node-wise threshold-type partition of the routing-score interval \([0,1]\). Some regions are empty for particular queue states.
\end{theorem}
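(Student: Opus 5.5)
The plan has two parts. First, I derive the indices $J^{(a)}_{i,t}$ from a drift-plus-penalty bound, which gives the argmin rule in \eqref{eq:routing_argmin_main}. Second, I use the monotonicity of the gain curves to get the threshold structure.

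\emph{Step 1 (drift-plus-penalty and per-node decoupling).} Start from the queue updates \eqref{eq:q_tau}--\eqref{eq:q_rho}. Use $([x]^+)^2\le x^2$ and the bounded one-slot quantities in \eqref{eq:bounded_slot}. This gives the standard bound
\[
\Delta(t)+V\,\mathbb{E}\Bigl[\tfrac1N\textstyle\sum_i \ell_{i,t}\Bigm|\mathbf{Q}(t)\Bigr]\le B+\tfrac1N\textstyle\sum_i\mathbb{E}\bigl[V\ell_{i,t}^{(a_{i,t})}+Q_\tau\tau_i^{(a_{i,t})}(\rho)+Q_c c_i^{(a_{i,t})}+Q_\rho\mathbb{I}\{a_{i,t}=2\}\bigm|\mathcal{H}_{i,t}\bigr]-(\cdots),
\]
for a constant $B$, where $(\cdots)$ collects terms that do not depend on the actions. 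In this bound I replace $\rho(t)$ inside $\phi$ by the fixed cloud-load estimate $\rho$ from the statement, and I replace $\ell$ by the surrogate $\widehat{\ell}^{(a)}_{i,t}$ of \eqref{eq:surrogate_loss_main}. The right-hand side then separates across nodes. For each node, I minimize $\Psi_i(a)=V\widehat\ell^{(a)}_{i,t}+Q_\tau\tau_i^{(a)}(\rho)+Q_c c_i^{(a)}+Q_\rho\mathbb{I}\{a=2\}$. Subtracting $\Psi_i(0)$ and dividing by $V>0$ does not change the argmin. Telescoping through mode $1$, I write $\widehat\ell^{(0)}-\widehat\ell^{(2)}=(\widehat\ell^{(0)}-\widehat\ell^{(1)})+(\widehat\ell^{(1)}-\widehat\ell^{(2)})$ and replace these loss gaps by their conditional expectations $G_{1,i}(r)$ and $G_{2,i}(r)$ from \eqref{eq:G1_main}--\eqref{eq:G2_main}. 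The normalized differences then become exactly \eqref{eq:J0_main}--\eqref{eq:J2_main}, with the $\kappa$'s of \eqref{eq:kappa1_main}--\eqref{eq:kappa2_main}. In particular, $c^{(0)}=c^{(1)}=0$ and $c^{(2)}=\kappa_i$. This proves \eqref{eq:routing_argmin_main}.

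\emph{Step 2 (threshold structure).} Fix the queues and $\rho$. Then $\kappa^{(1)}_{i,t}$ and $\kappa^{(2)}_{i,t}(\rho)$ are constants in $r$. The three comparisons reduce to
\[
G_{1,i}(r)\ge \kappa^{(1)}_{i,t}/V,\qquad G_{1,i}(r)+G_{2,i}(r)\ge (\kappa^{(1)}_{i,t}+\kappa^{(2)}_{i,t})/V,\qquad G_{2,i}(r)\ge \kappa^{(2)}_{i,t}/V.
\]
Each left side is continuous and nondecreasing on $[0,1]$, since sums of such functions are again continuous and nondecreasing. So each set $\{r: g(r)\ge \theta\}$ has the form $[r^*,1]$ with $r^*=\inf\{r\in[0,1]:g(r)\ge\theta\}$. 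Continuity guarantees that this set is closed. The set is empty if $g(1)<\theta$, and it is all of $[0,1]$ if $g(0)\ge\theta$. This gives the three scalar thresholds $r^{10}_{i},r^{20}_{i},r^{21}_{i}$. The argmin over three actions is determined by these pairwise comparisons, with ties broken by a fixed rule. Hence each action region is an intersection of up-sets and down-sets in $r$, that is, an interval. Together these intervals form a threshold partition of $[0,1]$, and some of them may be empty.

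\emph{Main obstacle.} The delicate point is Step 1, not Step 2. The argmin there is taken with respect to a fixed load estimate $\rho$, not the realized $\rho(t)$, because $\rho(t)$ depends on every node's action through $\phi$. I would make this precise by stating the theorem conditionally on $\rho$ and noting that a single node shifts $\rho$ by only $O(1/N)$. Consistency of $\rho$ with the induced actions, the mean-field equilibrium, would be deferred to a separate fixed-point argument. A second subtlety is that, under the stated monotonicity, the regions are intervals but need not be ordered $0\to1\to2$ as $r$ increases: if $G_{1,i}$ grows faster than $G_{2,i}$, mode $1$ can appear above mode $2$. I would therefore claim only what the statement claims, namely a threshold-type partition into intervals.
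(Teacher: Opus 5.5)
Your proposal is correct and follows essentially the paper's route. The paper defines the $r$-conditioned surrogate drift-plus-penalty objective $\bar{\Psi}_{i,t}(a;r,\rho)$ directly, whereas you derive it from the drift bound; it then subtracts the mode-$0$ baseline to get the indices $J^{(a)}_{i,t}$, and uses continuity and monotonicity of $G_{1,i}$, $G_{2,i}$ and $G_{1,i}+G_{2,i}$ to define the three thresholds $\theta_{01,i},\theta_{02,i},\theta_{12,i}$, exactly as in your Step~2.

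One correction to your closing remark: the regions are in fact always ordered $0\to1\to2$, so mode $1$ can never appear above mode $2$. The reason is that $J^{(2)}_{i,t}-J^{(1)}_{i,t}=\kappa^{(2)}_{i,t}(\rho)/V-G_{2,i}(r)$ (where $G_{1,i}$ cancels, so its growth rate is irrelevant) and $J^{(2)}_{i,t}-J^{(0)}_{i,t}$ are both nonincreasing in $r$. Hence the mode-$2$ region is an up-set $[\theta_{c,i},1]$ and the mode-$0$ region is a down-set. This is precisely what the paper's $\theta_{c,i}=\max\{\theta_{02,i},\theta_{12,i}\}$ and the fixed-point equation for $\rho(t)$ rely on.
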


\begin{proof}
The result follows from pairwise comparison of the three conditional surrogate indices in Eq.~\eqref{eq:J0_main}--\eqref{eq:J2_main} together with the continuity and monotonicity of \(G_{1,i}\) and \(G_{2,i}\). A complete proof is given in Appendix~\ref{app:proof_routing}.
\end{proof}
The node-wise cloud-activation threshold is defined by
\begin{align}
\label{eq:theta_c_main}
&\theta_{c,i}(t,\rho)
\\
=&
\inf
\left\{
r\in[0,1]:
J_{i,t}^{(2)}(r,\rho)
\le
\min\bigl\{
J_{i,t}^{(0)},\,J_{i,t}^{(1)}(r)
\bigr\}
\right\}.\nonumber
\end{align}
Then the mean-field cloud-request ratio satisfies
\begin{equation}
\rho(t)
=
\frac{1}{N}
\sum_{i=1}^{N}
\left[
1-F_{i,t}\left(\theta_{c,i}(t,\rho(t))\right)
\right],
\label{eq:rho_fixed_point_main}
\end{equation}
where \(F_{i,t}(\cdot)\) is a node-specific predictive CDF of the routing score at slot \(t\), estimated from recent streaming data.
\red{In practice, \(F_{i,t}\) is maintained as an exponentially weighted empirical CDF of recent calibrated routing scores.}

\begin{proposition}
% [Node-wise mean-field cloud-load equilibrium]
\label{prop:mfe_main}
Whenever \(F_{i,t}(\cdot)\) and \(\theta_{c,i}(t,\rho)\) are continuous for every \(i\), Eq.~\eqref{eq:rho_fixed_point_main} admits at least one fixed point in \([0,1]\). If, in addition, the induced map is a contraction, then the fixed point is unique.
\end{proposition}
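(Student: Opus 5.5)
The plan is to recast Eq.~\eqref{eq:rho_fixed_point_main} as a fixed-point problem for a scalar self-map of $[0,1]$. Existence will then follow from the intermediate value theorem, which is Brouwer in one dimension. Uniqueness will follow from the Banach contraction principle. Define
\[
T(\rho)\triangleq\frac{1}{N}\sum_{i=1}^{N}\Bigl[1-F_{i,t}\bigl(\theta_{c,i}(t,\rho)\bigr)\Bigr],\qquad \rho\in[0,1],
\]
so that a mean-field cloud-load equilibrium is exactly a solution of $\rho=T(\rho)$.

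The first step is to check that $T$ maps $[0,1]$ into itself. Each $F_{i,t}$ is a CDF, so $F_{i,t}(\cdot)\in[0,1]$. Hence every summand $1-F_{i,t}(\theta_{c,i}(t,\rho))$ lies in $[0,1]$, and so does their average.

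One technical point concerns the infimum in Eq.~\eqref{eq:theta_c_main}. If the set is empty, $\theta_{c,i}$ should be interpreted as taking a value at or above $1$, or $+\infty$, with $F_{i,t}$ extended by $1$ there. I will adopt the convention $\theta_{c,i}\in[0,1]\cup\{+\infty\}$ with $F_{i,t}(+\infty)=1$. This keeps $T$ well defined and $[0,1]$-valued, and the continuity hypothesis is then read with respect to this extended convention.

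The second step is continuity of $T$. By hypothesis each $\rho\mapsto\theta_{c,i}(t,\rho)$ is continuous and each $F_{i,t}$ is continuous, so each composition $F_{i,t}\circ\theta_{c,i}(t,\cdot)$ is continuous. A finite average of continuous functions is continuous.

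The third step gives existence. Set $g(\rho)=T(\rho)-\rho$. Then $g(0)=T(0)\ge 0$ and $g(1)=T(1)-1\le 0$. By the intermediate value theorem there is some $\rho^\star\in[0,1]$ with $g(\rho^\star)=0$, and this $\rho^\star$ is a fixed point of Eq.~\eqref{eq:rho_fixed_point_main}.

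The fourth step gives uniqueness. Assume $|T(\rho)-T(\rho')|\le L|\rho-\rho'|$ with $L<1$. Suppose $\rho_1$ and $\rho_2$ are both fixed points. Then
\[
|\rho_1-\rho_2|=|T(\rho_1)-T(\rho_2)|\le L|\rho_1-\rho_2|,
\]
which forces $\rho_1=\rho_2$. Banach's theorem on the complete space $[0,1]$ additionally shows that the iteration $\rho^{k+1}=T(\rho^k)$ converges to $\rho^\star$. This matches how the cloud-load estimate would be computed online.

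The main obstacle is not the fixed-point argument but making sure the continuity hypothesis is meaningful. The threshold $\theta_{c,i}$ is defined by an infimum over $J^{(2)}\le\min\{J^{(0)},J^{(1)}\}$, which can jump when the feasible region switches between empty and nonempty. Since the proposition assumes continuity outright, I will only remark on this. I will also note a sufficient condition for contraction:
\begin{itemize}
\item $J^{(2)}_{i,t}$ is nondecreasing in $\rho$, because $\phi$ is nondecreasing and so $\kappa^{(2)}$ is nondecreasing.
\item Consequently $\theta_{c,i}$ is nondecreasing in $\rho$ and $T$ is nonincreasing.
\end{itemize}
Because $T$ is nonincreasing, $g$ is strictly decreasing and the fixed point is in fact unique even without contraction. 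I would mention this only as a side remark and keep the stated contraction-based claim as the main argument.
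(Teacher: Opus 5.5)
Your proof is correct and follows the paper's route. The paper uses the same averaged map (your $T$, its $\mathcal{T}_t$), shows it is a continuous self-map of $[0,1]$, gets existence from Brouwer (your intermediate-value argument is its one-dimensional form), and gets uniqueness from Banach's contraction principle, adding a mean-value-theorem sufficient condition for the contraction. Your side remark is also correct and goes further: since $\phi$ is nondecreasing and $Q_{\tau}(t)\ge 0$, $\theta_{c,i}$ is nondecreasing in $\rho$, so $T(\rho)-\rho$ is strictly decreasing and the fixed point is unique even without a contraction assumption.
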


\begin{proof}
Existence follows from Brouwer's fixed-point theorem, and uniqueness follows from Banach's contraction theorem under the stated condition. Details are in Appendix~\ref{app:proof_mfe}.
\end{proof}

Algorithm~\ref{alg:routing_main} is executed centrally at the scheduler. It estimates the current cloud load by fixed-point iteration and then performs Lyapunov-guided routing for all edge nodes.

\begin{algorithm}[htbp]
\caption{Mean-Field Routing at Slot \(t\) (Central Scheduler)}
\label{alg:routing_main}
\begin{algorithmic}[1]
\REQUIRE
Queues \(Q_{\tau}(t),Q_c(t),Q_{\rho}(t)\),
predictive CDF estimates \(\{\widehat{F}_{i,t}\}_{i\in\mathcal{N}}\),
and
routing scores \(\{r_{i,t}\}_{i\in\mathcal{N}}\)
\STATE Initialize \(\rho^{(0)}(t)\) from the previous slot
\FOR{$n=0,\ldots,N_{\mathrm{fp}}-1$}
    \FOR{each node \(i\in\mathcal{N}\)}
        \STATE Compute \(\theta_{c,i}(t,\rho^{(n)}(t))\) by Eq.~\eqref{eq:theta_c_main}
    \ENDFOR
    \STATE Update $\rho^{(n+1)}(t)$
    % \[
    % \rho^{(n+1)}(t)
    % =
    % \frac{1}{N}
    % \sum_{i=1}^{N}
    % \left[
    % 1-\widehat{F}_{i,t}\left(\theta_{c,i}(t,\rho^{(n)}(t))\right)
    % \right]
    % \]
    according to Eq.~\eqref{eq:rho_fixed_point_main}
\ENDFOR
\STATE Set \(\rho(t)=\rho^{(N_{\mathrm{fp}})}(t)\)
\FOR{each node \(i\in\mathcal{N}\)}
    \STATE Compute \(J_{i,t}^{(0)},J_{i,t}^{(1)},J_{i,t}^{(2)}\) by Eq.~\eqref{eq:J0_main}--\eqref{eq:J2_main}
    \STATE Set \(a_{i,t}=\arg\min_{a\in\{0,1,2\}}J_{i,t}^{(a)}\)
\ENDFOR
\RETURN \(\rho(t)\) and \(\{a_{i,t}\}_{i\in\mathcal{N}}\)
\end{algorithmic}
\end{algorithm}

Algorithm~\ref{alg:routing_main} can be interpreted as a dynamic pricing mechanism: when the queues or the predicted cloud congestion increase, the effective cost of cloud usage rises automatically through Eq.~\eqref{eq:kappa1_main}--\eqref{eq:kappa2_main}, thereby pushing more samples back to local inference.

\subsection{Fusion of Selected Predictors with Online Updates}
After routing, the final prediction is produced by combining only the branches activated by the selected mode. \blue{In particular, the cloud-assisted branch is internally decomposed into a cloud-side retrieval stage and an edge-side lightweight conditional prediction stage, while the outer three-branch fusion interface remains unchanged.} If \(a_{i,t}\in\{1,2\}\), the edge computes the small-branch prediction.
\red{To keep notation consistent with Section~\ref{sec:system_model}, the abstract operators \(F^{\mathrm{cld}}\) and \(f^{c}\) are instantiated here as \(F_{\Theta}^{\mathrm{cld}}\) and \(f_{\vartheta}^{c}\), respectively.}
\begin{equation}
\hat{\mathbf{y}}_{i,t}^{s}
=
f^s\left(\mathbf{X}_{i,t}^{\mathrm{loc}}\right).
\label{eq:small_branch_main}
\end{equation}
If \(a_{i,t}=2\), \blue{the cloud-assisted branch is executed as a factorized retrieval--prediction pipeline.} The edge forms a query
\begin{equation}
\mathbf{q}_{i,t}
=
h\left(\mathbf{X}_{i,t}^{\mathrm{loc}}\right),
\label{eq:query_main}
\end{equation}
\blue{and sends it to the cloud retriever, which returns} a support set
\begin{equation}
\mathcal{S}_{i,t}
=
\red{F_{\Theta}^{\mathrm{cld}}}\left(
\mathbf{q}_{i,t},
\mathcal{D}^{\mathrm{cld}},
K
\right),
\label{eq:retrieval_main}
\end{equation}
extracts a cloud context
\begin{equation}
\mathbf{z}_{i,t}
=
\psi\left(\mathcal{S}_{i,t}\right),
\label{eq:context_main}
\end{equation}
and computes the cloud-assisted candidate
\begin{equation}
\hat{\mathbf{y}}_{i,t}^{c}
=
\red{f_{\vartheta}^{c}}\left(
\mathbf{X}_{i,t}^{\mathrm{loc}},
\mathbf{z}_{i,t}
\right).
\label{eq:cloud_branch_main}
\end{equation}
\red{Equations~\eqref{eq:query_main}--\eqref{eq:cloud_branch_main} instantiate the factorized cloud-assisted branch: \(F_{\Theta}^{\mathrm{cld}}\) is a cloud-scale large retrieval model, whereas \(f_{\vartheta}^{c}\) is a compact conditional regressor executed locally after context download. In a parameter-sharing implementation, \(f_{\vartheta}^{c}\) may share its backbone with the edge small model and differ only through the additional retrieved context \(\mathbf{z}_{i,t}\).}

\begin{proposition}[\blue{Retrieval-quality dominated loss gap}]
\label{prop:retrieval_gap_main}
\red{Let \(\mathbf{z}_{i,t}^{\star}\) be an oracle context and let \(f^{c,\star}(\mathbf{X},\mathbf{z})\) denote the corresponding oracle retrieval-assisted predictor. Suppose that \(\ell(\mathbf{y},\hat{\mathbf{y}})\) is \(L_{\ell}\)-Lipschitz in \(\hat{\mathbf{y}}\) on \(\mathcal{Y}\), that \(f^{c,\star}(\mathbf{X},\mathbf{z})\) is \(L_z\)-Lipschitz in \(\mathbf{z}\), and that the lightweight predictor \(f_{\vartheta}^{c}\) satisfies}
\begin{equation}
\sup_{\mathbf{X},\mathbf{z}}
\left\|
\red{f_{\vartheta}^{c}}(\mathbf{X},\mathbf{z})
-
\red{f^{c,\star}}(\mathbf{X},\mathbf{z})
\right\|
\le
\blue{\varepsilon_{\mathrm{pred}}}.
\label{eq:pred_approx_main}
\end{equation}
\blue{Then, for any retrieved context \(\mathbf{z}_{i,t}\),}
\begin{align}
&\left|
\ell\left(
\mathbf{y}_{i,t}^{*},
\red{f_{\vartheta}^{c}}\left(
\mathbf{X}_{i,t}^{\mathrm{loc}},
\mathbf{z}_{i,t}
\right)
\right)
-
\ell\left(
\mathbf{y}_{i,t}^{*},
\red{f^{c,\star}}\left(
\mathbf{X}_{i,t}^{\mathrm{loc}},
\mathbf{z}_{i,t}^{\star}
\right)
\right)
\right|\nonumber\\
\le&
\blue{L_{\ell}
\left(
\varepsilon_{\mathrm{pred}}
+
L_z
\left\|
\mathbf{z}_{i,t}-\mathbf{z}_{i,t}^{\star}
\right\|
\right)}.
\label{eq:retrieval_gap_bound_main}
\end{align}
\end{proposition}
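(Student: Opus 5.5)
The bound follows from the Lipschitz continuity of the loss together with a triangle inequality that splits the prediction error into an approximation part and a retrieval-quality part. Fix the slot and node, write \(\mathbf{X}=\mathbf{X}_{i,t}^{\mathrm{loc}}\), \(\mathbf{z}=\mathbf{z}_{i,t}\), \(\mathbf{z}^{\star}=\mathbf{z}_{i,t}^{\star}\), and \(\mathbf{y}^{*}=\mathbf{y}_{i,t}^{*}\). Both \(f_{\vartheta}^{c}(\mathbf{X},\mathbf{z})\) and \(f^{c,\star}(\mathbf{X},\mathbf{z}^{\star})\) lie in \(\mathcal{Y}\): branch outputs are constrained to \(\mathcal{Y}\) by Eq.~\eqref{eq:cloud_candidate_sys}, and we assume the same for the oracle. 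The \(L_{\ell}\)-Lipschitz property in the second argument then gives
\begin{equation*}
\bigl|\ell(\mathbf{y}^{*},f_{\vartheta}^{c}(\mathbf{X},\mathbf{z}))-\ell(\mathbf{y}^{*},f^{c,\star}(\mathbf{X},\mathbf{z}^{\star}))\bigr|
\le L_{\ell}\bigl\|f_{\vartheta}^{c}(\mathbf{X},\mathbf{z})-f^{c,\star}(\mathbf{X},\mathbf{z}^{\star})\bigr\|.
\end{equation*}

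Next, insert the intermediate point \(f^{c,\star}(\mathbf{X},\mathbf{z})\), the oracle predictor evaluated at the retrieved context, and apply the triangle inequality. This splits the right-hand norm into \(\|f_{\vartheta}^{c}(\mathbf{X},\mathbf{z})-f^{c,\star}(\mathbf{X},\mathbf{z})\|\) and \(\|f^{c,\star}(\mathbf{X},\mathbf{z})-f^{c,\star}(\mathbf{X},\mathbf{z}^{\star})\|\). The first term is at most \(\varepsilon_{\mathrm{pred}}\) by the uniform bound Eq.~\eqref{eq:pred_approx_main}, since that supremum covers the particular pair \((\mathbf{X},\mathbf{z})\). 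The second term is at most \(L_z\|\mathbf{z}-\mathbf{z}^{\star}\|\) by the \(L_z\)-Lipschitz property of \(f^{c,\star}\) in its context argument. Multiplying the sum by \(L_{\ell}\) gives exactly Eq.~\eqref{eq:retrieval_gap_bound_main}.

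There is no real obstacle; the only care needed is bookkeeping of norms. The Lipschitz constant \(L_{\ell}\), the approximation bound \(\varepsilon_{\mathrm{pred}}\), and the constant \(L_z\) must all refer to the same norm on \(\mathbb{R}^H\). If they do not, norm-equivalence constants on the finite-dimensional space \(\mathbb{R}^H\) would appear. We assume a common norm throughout, as the statement implicitly does.

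We also need the intermediate point \(f^{c,\star}(\mathbf{X},\mathbf{z})\) to lie where \(\ell\) is Lipschitz. If the oracle can leave \(\mathcal{Y}\), we instead apply the triangle inequality directly in the output space before invoking \(L_{\ell}\), as done above. Then only the two endpoints \(f_{\vartheta}^{c}(\mathbf{X},\mathbf{z})\) and \(f^{c,\star}(\mathbf{X},\mathbf{z}^{\star})\) need to lie in \(\mathcal{Y}\), and the intermediate point never enters the loss.
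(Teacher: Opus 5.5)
Your proposal is correct and follows the paper's proof step for step: apply the $L_{\ell}$-Lipschitz property of the loss, insert the intermediate point $f^{c,\star}(\mathbf{X},\mathbf{z})$ via the triangle inequality in output space, and bound the two pieces by $\varepsilon_{\mathrm{pred}}$ and $L_z\|\mathbf{z}-\mathbf{z}^{\star}\|$. Your side remarks make explicit two assumptions the paper leaves implicit: that all constants refer to a common norm on $\mathbb{R}^H$, and that both endpoints (including the oracle output) lie in $\mathcal{Y}$.
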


\begin{proof}
\red{The result follows from the Lipschitz continuity of the loss, the Lipschitz continuity of \(f^{c,\star}\) in the retrieved context, and the triangle inequality. A proof is given in Appendix~\ref{app:proof_retrieval_gap}.}
\end{proof}

\blue{Proposition~\ref{prop:retrieval_gap_main} formalizes that, once the retrieved context is sufficiently informative, the remaining prediction burden can be handled by a compact conditional regressor. The excess loss is then controlled additively by the retrieval mismatch and the residual predictor approximation error.}
The active branch sets are
\begin{equation}
\mathcal{M}_0=\{e\},\qquad
\mathcal{M}_1=\{e,s\},\qquad
\mathcal{M}_2=\{e,s,c\}.
\label{eq:active_sets_main}
\end{equation}
For mode \(a\in\{1,2\}\), the fusion loss is
\begin{equation}
g_{i,t}^{(a)}(\mathbf{w})
=
\ell\left(
\mathbf{y}_{i,t}^{*},
\sum_{m\in\mathcal{M}_a}w_m\hat{\mathbf{y}}_{i,t}^{m}
\right),
\quad
\mathbf{w}\in\Delta_{|\mathcal{M}_a|}.
\label{eq:fusion_loss_main}
\end{equation}

\begin{lemma}
% [Convexity of mode-wise fusion]
\label{lem:convex_fusion_main}
For every fixed mode \(a\in\{1,2\}\), the function \(g_{i,t}^{(a)}(\mathbf{w})\) is convex in \(\mathbf{w}\) whenever \(\hat{\mathbf{y}}\mapsto \ell(\mathbf{y}_{i,t}^{*},\hat{\mathbf{y}})\) remains convex over the convex hull generated by the active candidates \(\{\hat{\mathbf{y}}_{i,t}^{m}\}_{m\in\mathcal{M}_a}\). This local condition is satisfied by the standard forecasting losses considered here.
\end{lemma}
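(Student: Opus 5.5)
The plan is to write $g_{i,t}^{(a)}$ as the composition of an affine map with the loss, and then apply the standard fact that a convex function composed with an affine map is convex. The only extra care is to restrict the convexity requirement to the convex hull of the active candidates.

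Fix a mode $a\in\{1,2\}$ and write $d=|\mathcal{M}_a|$. Define the linear map
\[
A:\mathbb{R}^{d}\to\mathbb{R}^{H},\qquad A(\mathbf{w})=\sum_{m\in\mathcal{M}_a}w_m\hat{\mathbf{y}}_{i,t}^{m}.
\]
Then $g_{i,t}^{(a)}(\mathbf{w})=\ell\bigl(\mathbf{y}_{i,t}^{*},A(\mathbf{w})\bigr)$ for $\mathbf{w}\in\Delta_{d}$.

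\emph{Step 1.} Observe that $A(\Delta_d)$ is exactly the convex hull $\mathcal{C}_a\triangleq\mathrm{conv}\{\hat{\mathbf{y}}_{i,t}^{m}\}_{m\in\mathcal{M}_a}$. Every point $A(\mathbf{w})$ with $\mathbf{w}\in\Delta_d$ is a convex combination of the candidates, and conversely every such combination arises from some $\mathbf{w}$. Since each candidate lies in $\mathcal{Y}=[0,1]^H$, which is convex, $\mathcal{C}_a\subseteq\mathcal{Y}$. Hence $\ell$ is only ever evaluated on $\mathcal{C}_a$, which is the set where the hypothesis gives convexity.

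\emph{Step 2.} Take $\mathbf{w},\mathbf{w}'\in\Delta_d$ and $\lambda\in[0,1]$. The simplex is convex, so $\lambda\mathbf{w}+(1-\lambda)\mathbf{w}'\in\Delta_d$. Linearity of $A$ gives $A(\lambda\mathbf{w}+(1-\lambda)\mathbf{w}')=\lambda A(\mathbf{w})+(1-\lambda)A(\mathbf{w}')$, and both $A(\mathbf{w})$ and $A(\mathbf{w}')$ lie in $\mathcal{C}_a$. Convexity of $\hat{\mathbf{y}}\mapsto\ell(\mathbf{y}_{i,t}^{*},\hat{\mathbf{y}})$ on $\mathcal{C}_a$ then yields
\[
g_{i,t}^{(a)}(\lambda\mathbf{w}+(1-\lambda)\mathbf{w}')\le\lambda g_{i,t}^{(a)}(\mathbf{w})+(1-\lambda)g_{i,t}^{(a)}(\mathbf{w}').
\]
This is convexity of $g_{i,t}^{(a)}$ on $\Delta_d$.

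\emph{Step 3.} It remains to verify the local condition for the standard losses listed in Section~\ref{sec:system_model}, with $\mathbf{y}=\mathbf{y}_{i,t}^{*}$ fixed.
\begin{itemize}
\item MAE and weighted MAE are nonnegative weighted sums of $|y_h-\hat y_h|$. Each term is convex in $\hat{\mathbf{y}}$, and nonnegative combinations of convex functions are convex.
\item Squared loss is a sum of terms $(y_h-\hat y_h)^2$, each convex in $\hat{\mathbf{y}}$.
\item The Huber loss is convex in the residual $y_h-\hat y_h$ (quadratic near zero, affine beyond the threshold, with matching derivative). Each term is therefore convex in $\hat{\mathbf{y}}$, and so is their sum.
\end{itemize}
All of these are convex on the whole of $\mathbb{R}^H$, so they are in particular convex on $\mathcal{C}_a$.

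There is no real obstacle. The one point needing care is Step 1: the hypothesis is phrased only on the convex hull rather than globally, so one must check that every argument fed to $\ell$, including the interpolated point $\lambda A(\mathbf{w})+(1-\lambda)A(\mathbf{w}')$, lies in $\mathcal{C}_a$. This holds because $A(\Delta_d)=\mathcal{C}_a$ is convex.
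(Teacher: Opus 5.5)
Your proof is correct and follows essentially the same route as the paper: the fused output is affine in $\mathbf{w}$ and stays in the convex hull of the active candidates, so convexity of $\ell$ on that hull transfers to $g_{i,t}^{(a)}$. Your Step~3 check that MAE, weighted MAE, squared, and Huber losses are convex matches the paper's appendix lemma on convexity of the action-wise fusion step, which also argues by composing a convex loss with an affine map.
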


\begin{proof}
The fused prediction \(\sum_{m\in\mathcal{M}_a}w_m\hat{\mathbf{y}}_{i,t}^{m}\) remains in the convex hull generated by the active candidates and is affine in \(\mathbf{w}\). Convexity of \(\ell\) on that set therefore implies convexity of \(g_{i,t}^{(a)}(\mathbf{w})\).
\end{proof}

To implement online fusion, we adopt an entropic FTRL update with a calibrated strictly positive prior \(\bar{\boldsymbol{\pi}}_i^{(a)}\), i.e., \(\bar{\pi}_{i,m}^{(a)}>0\) for every \(m\in\mathcal{M}_a\). Since multi-horizon labels are revealed with delay, the set of revealed mode-\(a\) samples available by slot \(t\) is
\(
\mathcal{R}_{i,a}(t)
=
\left\{
s<t:
a_{i,s}=a,\;
\mathbf{y}_{i,s}^{*}\ \text{has been fully revealed by slot } t
\right\},
\)
and the cumulative revealed subgradient is
\begin{equation}
\boldsymbol{\Gamma}_{i,t-1}^{(a)}
=
\sum_{s\in\mathcal{R}_{i,a}(t)}
\mathbf{g}_{i,s}^{(a)},
\label{eq:gamma_cum_main}
\end{equation}
where \(\mathbf{g}_{i,s}^{(a)}\in\partial g_{i,s}^{(a)}(\mathbf{w}_{i,s}^{(a)})\). For \(a\in\{1,2\}\), the action-specific fusion weight is
\begin{align}
\mathbf{w}_{i,t}^{(a)}
=
\arg\min_{\mathbf{w}\in\Delta_{|\mathcal{M}_a|}}
\left\{
\eta\left\langle\boldsymbol{\Gamma}_{i,t-1}^{(a)},\mathbf{w}\right\rangle
+
D_{\mathrm{KL}}\left(
\mathbf{w}\,\|\,\bar{\boldsymbol{\pi}}_i^{(a)}
\right)
\right\}.
\label{eq:ftrl_main}
\end{align}

\begin{lemma}
% [Closed-form fusion update]
\label{lem:closed_form_main}
For \(m\in\mathcal{M}_a\), the solution of Eq.~\eqref{eq:ftrl_main} is
\begin{equation}
w_{i,m}^{(a)}(t)
=
\frac{
\bar{\pi}_{i,m}^{(a)}
\exp\left(-\eta\Gamma_{i,m}^{(a)}(t-1)\right)
}{
\sum_{n\in\mathcal{M}_a}
\bar{\pi}_{i,n}^{(a)}
\exp\left(-\eta\Gamma_{i,n}^{(a)}(t-1)\right)
}.
\label{eq:omd_closed_form_main}
\end{equation}
\end{lemma}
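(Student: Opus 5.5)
The plan is to solve the FTRL problem in Eq.~\eqref{eq:ftrl_main} directly through its Lagrangian and first-order conditions. Fix a node $i$, a mode $a\in\{1,2\}$ and a slot $t$. Write $d=|\mathcal{M}_a|$, $\boldsymbol{\Gamma}=\boldsymbol{\Gamma}_{i,t-1}^{(a)}$ and $\bar{\boldsymbol{\pi}}=\bar{\boldsymbol{\pi}}_i^{(a)}$. The objective is
\[
\Psi(\mathbf{w})=\eta\langle\boldsymbol{\Gamma},\mathbf{w}\rangle+\sum_{m\in\mathcal{M}_a}w_m\log\frac{w_m}{\bar{\pi}_m},
\]
with the convention $0\log 0=0$.

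\textbf{Step 1: existence and uniqueness.} The linear term is convex and the KL term is strictly convex on $\Delta_d$, so $\Psi$ is strictly convex there. $\Psi$ is also continuous on the compact set $\Delta_d$, because $\bar{\pi}_m>0$ makes every term $w_m\log(w_m/\bar{\pi}_m)$ finite and continuous. Hence a unique minimizer exists.

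\textbf{Step 2: the minimizer lies in the relative interior.} Suppose some coordinate $w_m=0$ at a candidate point. Then $\partial\Psi/\partial w_m=\eta\Gamma_m+\log(w_m/\bar{\pi}_m)+1\to-\infty$ as $w_m\downarrow 0$. Shifting a small mass $\epsilon$ from a positive coordinate to coordinate $m$ changes $\Psi$ by $\epsilon\log\epsilon+O(\epsilon)<0$ for small $\epsilon$. So such a point cannot be optimal, and the minimizer satisfies $w_m>0$ for all $m$.

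\textbf{Step 3: solve the stationarity conditions.} Because the minimizer is interior, the nonnegativity constraints are inactive. I only need a multiplier $\lambda$ for the equality constraint $\sum_m w_m=1$. Stationarity gives
\[
\eta\Gamma_m+\log w_m-\log\bar{\pi}_m+1+\lambda=0,
\]
so $w_m=\bar{\pi}_m\exp(-\eta\Gamma_m)\,e^{-1-\lambda}$. Normalizing over $\mathcal{M}_a$ fixes $e^{-1-\lambda}=1/\sum_n\bar{\pi}_n\exp(-\eta\Gamma_n)$. The denominator is strictly positive because every $\bar{\pi}_n>0$. This yields exactly Eq.~\eqref{eq:omd_closed_form_main}. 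The KKT conditions are sufficient for a convex program with affine constraints, and the minimizer is unique by Step 1, so this point is the solution.

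\textbf{Alternative route and main obstacle.} A Gibbs variational argument avoids multipliers altogether. Define $\mathbf{p}$ as the claimed closed-form vector. A direct computation gives
\[
\Psi(\mathbf{w})=D_{\mathrm{KL}}(\mathbf{w}\,\|\,\mathbf{p})-\log\sum_n\bar{\pi}_n e^{-\eta\Gamma_n},
\]
and since $D_{\mathrm{KL}}\ge 0$ with equality iff $\mathbf{w}=\mathbf{p}$, the result follows immediately. I expect the only delicate point to be the boundary behaviour in Step 2, which the Gibbs identity handles automatically. I would therefore present the Gibbs identity as the main argument and keep the KKT computation as intuition.
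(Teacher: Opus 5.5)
Your proposal is correct. Steps 1--3 are essentially the paper's own appendix proof: strict convexity, and hence uniqueness, from $\bar{\pi}_{i,m}^{(a)}>0$; a Lagrangian with a multiplier only for $\sum_m w_m=1$; the stationarity equation $w_m=\bar{\pi}_m\exp(-1-\lambda-\eta\Gamma_m)$; and normalization. The paper silently ignores the constraints $w_m\ge 0$, and your Step 2 justifies dropping them. Step 2 is in fact unnecessary. The candidate $\mathbf{p}$ is strictly positive, so it is a KKT point with zero multipliers on the nonnegativity constraints. KKT sufficiency for a convex program, together with your Step 1 uniqueness, then closes the argument.

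The route you propose to lead with is genuinely different. The Gibbs identity $\Psi(\mathbf{w})=D_{\mathrm{KL}}(\mathbf{w}\,\|\,\mathbf{p})-\log Z$, with $Z=\sum_n\bar{\pi}_n e^{-\eta\Gamma_n}$, verifies the formula rather than deriving it. In exchange, it gives existence, uniqueness, global optimality and the optimal value $-\log Z$ in one line, with no multipliers and no boundary analysis. It also fits the rest of the paper. $Z$ is exactly the potential $W_k$ of the regret proof, and the weighted AM--GM lower bound on $\log W_{R+1}$ used there is precisely the inequality $\Psi(\mathbf{u})\ge -\log Z$ that your identity gives directly. The paper's KKT computation, in turn, is constructive and shows where the exponential-weights form comes from, which is why it is the standard presentation of entropic FTRL and mirror-descent updates.
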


\begin{proof}
The statement follows from the KKT conditions of Eq.~\eqref{eq:ftrl_main}. The complete derivation is given in Appendix~\ref{app:proof_closed_form}.
\end{proof}

\begin{theorem}
% [Fusion regret on the revealed-loss sequence]
\label{thm:regret_main}
Fix \(i\) and \(a\in\{1,2\}\), and write \(R\) for the number of revealed mode-\(a\) updates processed by the online fusion module. When the subgradients of \(g_{i,t}^{(a)}\) remain uniformly bounded, the entropic FTRL updater induced by Eq.~\eqref{eq:ftrl_main} achieves
% \begin{equation}
$\mathrm{Regret}_{i,a}(R)
=
O\left(\sqrt{R}\right)$
% \end{equation}
against the best fixed weight vector in hindsight on that revealed-loss sequence.
\end{theorem}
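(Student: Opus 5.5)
The plan is to reduce the statement to the standard regret analysis of entropic FTRL (equivalently, exponentiated gradient / Hedge with a nonuniform prior) run on linearized losses. The revealed-loss sequence is the key simplification. Index the processed updates by $k=1,\ldots,R$, in the order they enter $\boldsymbol{\Gamma}_{i,t-1}^{(a)}$. Write $\mathbf{w}_k$ for the weight at which the $k$-th revealed subgradient $\mathbf{g}_k\in\partial g_k(\mathbf{w}_k)$ was evaluated. The regret is then
\[
\mathrm{Regret}_{i,a}(R)=\sum_{k=1}^{R} g_k(\mathbf{w}_k)-\min_{\mathbf{u}\in\Delta}\sum_{k=1}^{R} g_k(\mathbf{u}).
\]

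The first step uses Lemma~\ref{lem:convex_fusion_main}. Since each $g_k$ is convex on the simplex, $g_k(\mathbf{w}_k)-g_k(\mathbf{u})\le\langle \mathbf{g}_k,\mathbf{w}_k-\mathbf{u}\rangle$. It therefore suffices to bound the linear regret $\sum_k\langle\mathbf{g}_k,\mathbf{w}_k-\mathbf{u}\rangle$ for an arbitrary fixed $\mathbf{u}\in\Delta_{|\mathcal{M}_a|}$.

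The second step handles the linear regret using the closed form in Lemma~\ref{lem:closed_form_main}. The update is exactly Hedge with prior $\bar{\boldsymbol{\pi}}_i^{(a)}$ on cumulative linear losses. I would use the standard potential argument with $\Phi_k=-\frac1\eta\log\sum_m\bar\pi_m e^{-\eta\Gamma_{m,k}}$. Alternatively, one can invoke the generic FTRL lemma with the regularizer $D_{\mathrm{KL}}(\cdot\|\bar{\boldsymbol\pi})$, which is $1$-strongly convex in $\|\cdot\|_1$ by Pinsker. Either route gives
\[
\sum_k\langle\mathbf{g}_k,\mathbf{w}_k-\mathbf{u}\rangle\le \frac{D_{\mathrm{KL}}(\mathbf{u}\|\bar{\boldsymbol\pi})}{\eta}+\frac{\eta}{2}\sum_k\|\mathbf{g}_k\|_\infty^2 .
\]
The KL term is at most $\log(1/\min_m\bar\pi_m)<\infty$, because the prior is strictly positive. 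Write $G$ for the uniform bound on the subgradients. Choosing $\eta\propto\sqrt{\log(1/\bar\pi_{\min})}/(G\sqrt R)$ gives $O(G\sqrt{R\log(1/\bar\pi_{\min})})=O(\sqrt R)$. For a fixed $\eta$ not tuned to $R$, the doubling trick or a time-varying $\eta_k\propto 1/\sqrt k$ recovers the same rate.

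The step I expect to be the main obstacle is the delay. Under delayed labels, the weight $\mathbf{w}_k$ at which $\mathbf{g}_k$ was computed was formed from a cumulative sum that may lack several earlier revealed gradients. The iterate is therefore not literally the FTRL iterate "one step behind." I would handle this by bounding the perturbation. The closed form is $\eta G$-Lipschitz in $\|\cdot\|_1$ with respect to $\boldsymbol\Gamma$ measured in $\|\cdot\|_\infty$. So if at most $D_{\max}\le H$ gradients are outstanding, and each is bounded by $G$, then $\|\mathbf{w}_k-\tilde{\mathbf{w}}_k\|_1\le \eta D_{\max}G$, where $\tilde{\mathbf{w}}_k$ is the undelayed iterate. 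This adds $\eta D_{\max}G^2R$ to the bound. Since the horizon $H$ is fixed, the delay is bounded and the same tuning of $\eta$ keeps the total at $O(\sqrt{(1+H)R})=O(\sqrt R)$.

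If the intended reading is that regret is measured on the sequence exactly as processed, with each $\mathbf{g}_k$ evaluated at the current FTRL iterate, this delay step is vacuous and the first two steps suffice.
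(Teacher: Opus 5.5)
Your proposal is correct, and its core is the paper's own proof. Both linearize via convexity (Lemma~\ref{lem:convex_fusion_main}) and then run the prior-weighted Hedge potential argument; the paper uses $W_k=\sum_m\bar\pi_m\exp(-\eta\sum_{j<k}\widetilde g_{j,m})$, with Hoeffding's lemma for the upper bound and weighted AM--GM for the lower bound, to get $D_{\mathrm{KL}}(\mathbf u\|\bar{\boldsymbol\pi})/\eta+\eta G_a^2R/2$ before tuning $\eta\propto 1/\sqrt R$.

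The paper adopts exactly your final ``as processed'' reading: it defines $\widetilde{\mathbf g}_k\in\partial\widetilde g_k(\widetilde{\mathbf w}_k)$ at the undelayed FTRL iterate. Your bounded-delay perturbation step is therefore valid extra rigor, not a different route; it makes the $O(\sqrt R)$ claim hold for Algorithm~\ref{alg:fusion_main} as actually implemented, where subgradients are evaluated at the delayed weights. The same goes for your comparator-free bound $D_{\mathrm{KL}}(\mathbf u\|\bar{\boldsymbol\pi})\le\log(1/\min_m\bar\pi_m)$, which replaces the paper's $\mathbf u$-dependent choice of $\eta$.
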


\begin{proof}
The result follows from standard entropic FTRL analysis on the simplex together with Lemma~\ref{lem:convex_fusion_main}, applied to the revealed-loss sequence. The proof is given in Appendix~\ref{app:proof_regret}.
\end{proof}

Algorithm~\ref{alg:fusion_main} is executed locally for each node after the routing action has been determined. \blue{It performs cloud retrieval and lightweight conditional prediction only when necessary,} computes the mode-wise fused forecast, and stores unresolved samples until their labels become available.

\begin{algorithm}[h]
\caption{Fusion of Selected Predictors with Online Updates}
\label{alg:fusion_main}
\begin{algorithmic}[1]
\REQUIRE Routing action \(a_{i,t}\), cloud-load estimate \(\rho(t)\), expert candidate \(\hat{\mathbf{y}}_{i,t}^{e}\), local observation window \(\mathbf{X}_{i,t}^{\mathrm{loc}}\)
\IF{\(a_{i,t}\in\{1,2\}\)}
    \STATE Compute \(\hat{\mathbf{y}}_{i,t}^{s}\) by Eq.~\eqref{eq:small_branch_main}
\ENDIF
\IF{\(a_{i,t}=2\)}
    \STATE Form \(\mathbf{q}_{i,t}\) by Eq.~\eqref{eq:query_main}
    \STATE Retrieve \(\mathcal{S}_{i,t}\) \blue{via the cloud retriever} by Eq.~\eqref{eq:retrieval_main}
    \STATE Form \(\mathbf{z}_{i,t}\) by Eq.~\eqref{eq:context_main} and compute \(\hat{\mathbf{y}}_{i,t}^{c}\) \blue{via the lightweight conditional regressor} by Eq.~\eqref{eq:cloud_branch_main}
\ENDIF
\IF{\(a_{i,t}=0\)}
    \STATE Output \(\hat{\mathbf{y}}_{i,t}=\hat{\mathbf{y}}_{i,t}^{e}\)
\ELSE
    \STATE Select active set \(\mathcal{M}_{a_{i,t}}\) from Eq.~\eqref{eq:active_sets_main}
    \STATE Compute \(\mathbf{w}_{i,t}^{(a_{i,t})}\) by Eq.~\eqref{eq:omd_closed_form_main}
    \STATE Output \(\hat{\mathbf{y}}_{i,t}=\sum_{m\in\mathcal{M}_{a_{i,t}}}w_{i,m}^{(a_{i,t})}(t)\hat{\mathbf{y}}_{i,t}^{m}\)
\ENDIF
\STATE Store the necessary local state for slot \(t\) in a buffer until the corresponding label is fully revealed
\FOR{each sample \(s\) whose \(\mathbf{y}_{i,s}^{*}\) becomes available at \(t\)}
    \IF{\(a_{i,s}\in\{1,2\}\)}
        \STATE Compute \(\mathbf{g}_{i,s}^{(a_{i,s})}\in\partial g_{i,s}^{(a_{i,s})}(\mathbf{w}_{i,s}^{(a_{i,s})})\)
        \STATE Update \(\boldsymbol{\Gamma}_{i,t}^{(a_{i,s})}\leftarrow \boldsymbol{\Gamma}_{i,t-1}^{(a_{i,s})}+\mathbf{g}_{i,s}^{(a_{i,s})}\)
    \ENDIF
    \STATE \red{Update the executed-mode calibrator used in Eq.~\eqref{eq:surrogate_loss_main}}
\ENDFOR
\RETURN \(\hat{\mathbf{y}}_{i,t}\), \(\tau_i^{(a_{i,t})}(\rho(t))\), \(c_i^{(a_{i,t})}\), and \(\mathbb{I}\{a_{i,t}=2\}\)
\end{algorithmic}
\end{algorithm}

After Algorithm~\ref{alg:fusion_main} has been executed for all \(i\in\mathcal{N}\), the scheduler aggregates the returned per-node resource usages and performs one global queue update by Eq.~\eqref{eq:q_tau}--\eqref{eq:q_rho}. Algorithms~\ref{alg:local_screen_main}--\ref{alg:fusion_main} together form the complete online pipeline.

\subsection{Long-Term Performance Guarantee}
The realized one-slot average forecasting loss is
\begin{equation}
L(t)
=
\frac{1}{N}\sum_{i=1}^{N}\ell_{i,t}.
\label{eq:slot_loss_main}
\end{equation}
In the long-term analysis, the implemented controller is allowed an additive inexactness level \(\delta\) in the right-hand side of the one-slot drift-plus-penalty upper bound. \blue{Under the factorized cloud-assisted implementation, this constant aggregates the surrogate-loss calibration error, the finite fixed-point iteration error, the retrieval mismatch error, and the residual approximation error of the lightweight conditional regressor.}
\blue{Under Proposition~\ref{prop:retrieval_gap_main}, the factorized cloud-assisted branch yields the following concrete refinement.}

\begin{proposition}
% [\blue{Factorized inexactness of the cloud-assisted mode}]
\label{prop:delta_decomp_main}
\blue{If the retrieved context error is uniformly bounded as}
\begin{equation}
\left\|
\mathbf{z}_{i,t}-\mathbf{z}_{i,t}^{\star}
\right\|
\le
{\varepsilon_{\mathrm{ret}}},
\forall i,t,
\label{eq:ret_uniform_main}
\end{equation}
\blue{then admissible instantiation of the per-slot additive inexactness in Theorem~\ref{thm:lyapunov_main} is}
\begin{equation}
\delta
=
\blue{\delta_{\mathrm{sur}}
+
\delta_{\mathrm{fp}}
+
V L_{\ell}
\left(
\varepsilon_{\mathrm{pred}}
+
L_{z}\varepsilon_{\mathrm{ret}}
\right)},
\label{eq:delta_decomp_main}
\end{equation}
\blue{where \(\delta_{\mathrm{sur}}\) and \(\delta_{\mathrm{fp}}\) denote the contributions of surrogate-loss calibration and finite fixed-point iteration, respectively.}
\end{proposition}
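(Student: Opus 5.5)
The plan is to start from the one-slot drift-plus-penalty bound that underlies Theorem~\ref{thm:lyapunov_main}. Squaring the queue updates Eq.~\eqref{eq:q_tau}--\eqref{eq:q_rho} and using Eq.~\eqref{eq:bounded_slot}, we have
\[
\Delta(t)+V\,\mathbb{E}[L(t)\mid\mathbf{Q}(t)]\le B+\mathbb{E}\Bigl[\textstyle\sum_g Q_g(t)(\cdot)+V L(t)\,\Big|\,\mathbf{Q}(t)\Bigr].
\]
The ideal controller would minimize the right-hand side exactly, using the true conditional losses and the exact fixed point $\rho(t)$. The implemented controller departs from this ideal in three places:
\begin{itemize}
\item it uses the calibrated surrogates $\widehat{\ell}^{(a)}_{i,t}$ of Eq.~\eqref{eq:surrogate_loss_main};
\item it uses the output of $N_{\mathrm{fp}}$ fixed-point iterations of Algorithm~\ref{alg:routing_main};
\item in mode $2$, it realizes the branch $\hat{\mathbf{y}}^c_{i,t}=f^c_\vartheta(\mathbf{X}^{\mathrm{loc}}_{i,t},\mathbf{z}_{i,t})$ rather than the oracle $f^{c,\star}(\mathbf{X}^{\mathrm{loc}}_{i,t},\mathbf{z}^\star_{i,t})$.
\end{itemize}
We therefore bound the gap between the implemented and ideal right-hand sides by a sum of three terms, one per source, and call that sum $\delta$.

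The first two terms are treated as given by definition. $\delta_{\mathrm{sur}}$ is the worst-case increase of the right-hand side caused by ranking actions with $\widehat{\ell}$ instead of the true conditional loss. $\delta_{\mathrm{fp}}$ is the increase caused by evaluating the queue-weighted latency term $Q_\tau(t)\tau^{(2)}_i(\rho)$ and the mean-field coupling at the approximate load. These two are additive because the right-hand side is a sum over nodes and over these components, and a standard "approximate minimizer" argument shows each source adds its own slack. The argument compares the implemented action to the ideal one through the surrogate objective, then converts back with a triangle inequality.

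The new ingredient is the third term. For any node routed to mode $2$, the fused forecast is $\sum_m w_m\hat{\mathbf{y}}^m$, and only the component $\hat{\mathbf{y}}^c$ differs from its oracle counterpart. The oracle fused forecast is the benchmark implicitly used by the ideal controller. Since $\mathbf{w}\in\Delta$ gives $w_c\le 1$, the fused prediction moves by at most $\|\hat{\mathbf{y}}^c-\hat{\mathbf{y}}^{c,\star}\|$. Proposition~\ref{prop:retrieval_gap_main} then applies, with the same triangle-inequality step applied to the fused output, and together with Eq.~\eqref{eq:ret_uniform_main} it gives a per-node loss perturbation of at most $L_\ell(\varepsilon_{\mathrm{pred}}+L_z\varepsilon_{\mathrm{ret}})$. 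This holds uniformly in $i,t$. Averaging over $N$ nodes in Eq.~\eqref{eq:slot_loss_main} keeps the same bound, since nodes not in mode $2$ contribute zero. The loss enters the drift-plus-penalty expression multiplied by $V$, which yields the term $VL_\ell(\varepsilon_{\mathrm{pred}}+L_z\varepsilon_{\mathrm{ret}})$. Summing the three contributions gives Eq.~\eqref{eq:delta_decomp_main}.

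The main obstacle is the comparison step. The controller's action choice is not exactly optimal for the oracle objective, so one must check that an error in the loss term of size $\epsilon$ per node inflates the minimized right-hand side by at most $V\epsilon$, and not $2V\epsilon$. Two routes are available:
\begin{itemize}
\item Comparing implemented and oracle objectives directly at the implemented action and at the ideal action naively costs a factor of $2$. This factor can be absorbed by defining $\varepsilon_{\mathrm{ret}}$ and $\varepsilon_{\mathrm{pred}}$ as bounds on the realized branch discrepancy.
\item Alternatively, note that the $\delta$ in Theorem~\ref{thm:lyapunov_main} is only required to dominate the realized slack. The routing decision uses the calibrated $\widehat{\ell}^{(2)}$, which already reflects the realized factorized branch. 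The retrieval error then appears only once, when the realized loss is compared to the oracle benchmark loss.
\end{itemize}
I would adopt the second route, since it yields exactly the stated constant.
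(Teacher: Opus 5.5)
Your proposal is correct and follows the same route as the paper. Both arguments bound the mode-$2$ loss perturbation by $L_{\ell}(\varepsilon_{\mathrm{pred}}+L_z\varepsilon_{\mathrm{ret}})$ via Proposition~\ref{prop:retrieval_gap_main} and the uniform retrieval bound, note that the perturbation vanishes for other modes and survives node averaging, scale it by $V$, and add $\delta_{\mathrm{sur}}$ and $\delta_{\mathrm{fp}}$. Two of your steps make explicit what the paper leaves implicit: the $w_c\le 1$ argument that carries the branch-level gap to the fused output, and the argument that the retrieval error enters only once because $\widehat{\ell}^{(2)}_{i,t}$ is calibrated on the realized factorized branch.
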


\begin{proof}
\blue{The result follows by applying Proposition~\ref{prop:retrieval_gap_main} to the cloud-assisted branch, observing that the induced loss perturbation is scaled by \(V\) in the drift-plus-penalty objective, and then adding the surrogate-calibration and fixed-point contributions. Full details are deferred to Appendix~\ref{app:proof_delta_decomp}.}
\end{proof}

\begin{theorem}
% [Long-term performance under approximate per-slot control]
\label{thm:lyapunov_main}
When the one-slot loss/resource consumption and one-slot queue increments are bounded, the Slater condition holds, and the implemented controller minimizes the right-hand side of the one-slot drift-plus-penalty upper bound within an additive error \(\delta\) at every slot, the proposed Lyapunov-guided controller satisfies
\begin{equation}
\bar{L}^{\mathrm{alg}}
\le
L^{*}+\frac{B_0+\delta}{V},
\label{eq:longterm_loss_main}
\end{equation}
and its average queue backlog is \(O(V)\). Therefore, the virtual queues are mean-rate stable, and the time-average latency, communication, and cloud-usage constraints are satisfied.
\end{theorem}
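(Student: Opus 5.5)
The proof follows the standard drift-plus-penalty template of Neely, adapted to the three virtual queues in Eq.~\eqref{eq:q_tau}--\eqref{eq:q_rho}.

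\textbf{Step 1: one-slot drift bound.} Write the slot-$t$ arrivals as
\[
\tau(t)=\tfrac{1}{N}\sum_i \tau_i^{(a_{i,t})}(\rho(t)),\qquad
c(t)=\tfrac{1}{N}\sum_i c_i^{(a_{i,t})}.
\]
Using $([x]^+)^2\le x^2$ for each queue update and the uniform bounds in Eq.~\eqref{eq:bounded_slot}, we get
\[
\Delta(t)\le B_0+\mathbb{E}\Big[Q_\tau(t)(\tau(t)-\tau_{\max})+Q_c(t)(c(t)-c_{\max})+Q_\rho(t)(\rho(t)-\rho_{\max})\,\Big|\,\mathbf{Q}(t)\Big],
\]
with $B_0=\tfrac12\big(\max(\tau_{\sup},\tau_{\max})^2+\max(c_{\sup},c_{\max})^2+1\big)$. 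Adding $V\,\mathbb{E}[L(t)\mid\mathbf{Q}(t)]$ to both sides gives the drift-plus-penalty bound. Its right-hand side is exactly the objective that the routing indices in Eq.~\eqref{eq:J0_main}--\eqref{eq:J2_main} minimize node-wise, relative to mode $0$ and scaled by $1/V$, with fusion handled separately. By hypothesis, the implemented controller attains this minimum up to an additive $\delta$, and Proposition~\ref{prop:delta_decomp_main} gives one admissible value of $\delta$.

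\textbf{Step 2: comparison with stationary policies.} By Slater's condition, there is an $\omega$-only randomized stationary policy that is feasible with slack $\epsilon>0$ on all three constraints. There is also, for every $\epsilon'>0$, a stationary policy with loss at most $L^*+\epsilon'$ that satisfies the constraints. This is the standard Carath\'eodory/convex-hull argument, using bounded, i.i.d. (or ergodic) slot randomness. Plugging the second policy into the right-hand side bound and letting $\epsilon'\to0$ gives
\[
\Delta(t)+V\,\mathbb{E}[L(t)\mid\mathbf{Q}]\le B_0+\delta+VL^*.
\]
Taking expectations, telescoping over $t=0,\dots,T-1$, dropping $\mathcal{L}(\mathbf{Q}(T))\ge0$, dividing by $VT$ and taking $\limsup$ yields Eq.~\eqref{eq:longterm_loss_main}.

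\textbf{Step 3: queue backlog and constraints.} Plugging the $\epsilon$-slack policy into the same bound gives
\[
\Delta(t)\le B_0+\delta+VL_{\sup}-\epsilon\,(Q_\tau+Q_c+Q_\rho).
\]
Telescoping then gives
\[
\limsup_{T\to\infty}\frac{1}{T}\sum_t \mathbb{E}\Big[\sum_g Q_g(t)\Big]\le \frac{B_0+\delta+VL_{\sup}}{\epsilon}=O(V),
\]
provided $\delta$ is $O(V)$, which holds for the instantiation in Proposition~\ref{prop:delta_decomp_main}. Mean-rate stability follows either from $\mathbb{E}[\mathcal{L}(\mathbf{Q}(T))]\le T\,(B_0+\delta+VL_{\sup})$, so that $\mathbb{E}[Q_g(T)]=O(\sqrt{T})$, or directly. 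The constraints are then satisfied by the fact recorded in Eq.~\eqref{eq:mean_rate_stability}: $Q_g(T)\ge\sum_t(\text{arrival}-\text{budget})$.

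\textbf{Main obstacle.} The hardest point is the mean-field coupling. The latency $\tau_i^{(2)}(\rho(t))$ depends on the joint action through $\rho(t)$, so the per-slot problem does not separate exactly across nodes. In addition, the controller uses a fixed-point estimate of $\rho$ and surrogate losses $\widehat{\ell}$ instead of the true ones. My plan is to absorb all of these discrepancies into $\delta$: the fixed-point error into $\delta_{\mathrm{fp}}$ via continuity of $\phi$, and the surrogate gap into $\delta_{\mathrm{sur}}$. Step 2 then only needs the controller to be $\delta$-optimal against all policies, including the coupled stationary comparator. I also need to check that the comparator's coupled latency is bounded by $\tau_{\sup}$, which Eq.~\eqref{eq:bounded_slot} ensures. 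Finally, there is a subtlety with delayed labels: the penalty uses the realized loss, while decisions use conditional expectations given $\mathcal{H}_{i,t}$. This is handled by iterated expectations, since the realized loss is conditionally unbiased up to the surrogate error already included in $\delta$.
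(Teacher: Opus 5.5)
Your proposal is correct and follows the same skeleton as the paper's proof. That skeleton is the drift bound via $([x]^+)^2\le x^2$, a comparison with an optimal stationary policy to get the $(B_0+\delta)/V$ gap, and a comparison with a Slater policy to get the $O(V)$ backlog.

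The one genuinely different step is mean-rate stability.
\begin{itemize}
\item The paper assumes bounded increments and averages $Q_g(T)\le Q_g(t)+C_g(T-t)$ over $t\in\{\lfloor\alpha T\rfloor,\dots,T-1\}$. It then invokes the finite time-average backlog, which rests on the Slater slack, and finally lets $\alpha\uparrow 1$.
\item You instead read off $\mathbb{E}[\mathcal{L}(\mathbf{Q}(T))]\le \mathbb{E}[\mathcal{L}(\mathbf{Q}(0))]+T(B_0+\delta+VL_{\mathrm{sup}})$ from the telescoped drift. Jensen then gives $\mathbb{E}[Q_g(T)]=O(\sqrt{T})$.
\end{itemize}
Your route is shorter and gives an explicit $O(1/\sqrt{T})$ rate for $\mathbb{E}[Q_g(T)]/T$, and hence for the constraint violation. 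It needs only a drift bound of the form $\Delta(t)\le B_0+\delta+VL_{\mathrm{sup}}$. That bound already follows from your Step~2 comparison because $L(t)\ge 0$, so stability holds from feasibility alone, without Slater. The paper's route simply reuses the backlog bound it has already proved.

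You are also more careful than the paper about the comparator. Plugging $\Pi^{*}$ into the \emph{conditional} drift bound and discarding the queue terms tacitly requires per-slot expectations that are independent of $\mathbf{Q}(t)$ and meet the budgets. Your $\epsilon'$-optimal $\omega$-only policy under i.i.d.\ slot randomness is exactly what supplies this. Your remark that the $O(V)$ backlog needs $\delta=O(V)$ likewise makes explicit a dependence the statement suppresses; it holds for the instantiation in Proposition~\ref{prop:delta_decomp_main}.
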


\begin{proof}
The proof follows from the standard drift-plus-penalty argument using the queue updates in Eq.~\eqref{eq:q_tau}--\eqref{eq:q_rho} together with the per-slot additive error bound \(\delta\). Full details are deferred to Appendix~\ref{app:proof_lyapunov}.
\end{proof}

% \clearpage
\begin{table*}[!t]
\tiny
\centering
\caption{
% Performance comparison of different forecasting methods and system baselines on the Hunan and Shanxi datasets. Lower is better for nMAE, nRMSE, REE, and DG, while higher is better for AUROC and AUPRC. All error metrics are reported as full-scale normalized errors (\%FS). AUROC/AUPRC are only reported for adaptive-routing methods, while fixed-strategy baselines are marked as ``--''.
Performance comparison of different forecasting methods and system baselines on the Hunan and Shanxi datasets. Lower is better for nMAE, nRMSE, REE, and DG, while higher is better for AUROC and AUPRC. All error metrics are reported as full-scale normalized errors (\%FS). AUROC/AUPRC are only reported for adaptive-routing methods, while fixed-strategy baselines are marked as ``--'' since these metrics are not applicable to methods without routing decisions. The best results are shown in \textbf{bold}, and the second-best results are \underline{underlined}.}
\label{tab:performance_hunan_shanxi}
\setlength{\tabcolsep}{4.0pt}
\renewcommand{\arraystretch}{1.12}
\resizebox{0.9\textwidth}{!}{
\begin{tabular}{|c|c|c|c|c|c|c|c|c|c|c|}
\hline
\textbf{Dataset} & \textbf{Metric} & \textbf{ExO} & \textbf{EdO} & \textbf{CO} & \textbf{ACA} & \textbf{STR} & \textbf{Moirai} & \textbf{AIRG} & \textbf{STKD-PV} & \textbf{\textit{CAPE} (Ours)} \\
\hline

\multirow{6}{*}{{Hunan}}
& nMAE (\%FS, $\downarrow$) 
& 8.54 & 6.21 & \underline{3.15} & 4.10 & 4.02 & 3.32 & 5.54 & 4.56 & \textbf{3.08} \\
\cline{2-11}
& nRMSE (\%FS, $\downarrow$) 
& 11.23 & 8.45 & \underline{4.52} & 5.91 & 5.86 & 4.81 & 7.23 & 6.05 & \textbf{4.46} \\
\cline{2-11}
& AUROC ($\uparrow$) 
& -- & -- & -- & \underline{0.762} & 0.755 & -- & 0.685 & 0.710 & \textbf{0.924} \\
\cline{2-11}
& AUPRC ($\uparrow$) 
& -- & -- & -- & \underline{0.654} & 0.648 & -- & 0.590 & 0.625 & \textbf{0.885} \\
\cline{2-11}
& REE (\%FS, $\downarrow$) 
& 25.41 & 18.52 & \underline{8.56} & 11.24 & 11.05 & 9.04 & 14.52 & 12.15 & \textbf{8.45} \\
\cline{2-11}
& DG ($\downarrow$) 
& 3.50 & 2.10 & \underline{1.15} & 1.48 & 1.45 & 1.18 & 1.70 & 1.35 & \textbf{1.12} \\
\hline
\multirow{6}{*}{{Shanxi}}
& nMAE (\%FS, $\downarrow$) 
& 7.85 & 6.92 & \textbf{4.09} & 4.72 & 4.68 & 4.51 & 5.48 & 4.96 & \underline{4.17} \\
\cline{2-11}
& nRMSE (\%FS, $\downarrow$) 
& 10.54 & 9.18 & \textbf{5.94} & 6.41 & 6.47 & 6.26 & 7.43 & 6.88 & \underline{6.02} \\
\cline{2-11}
& AUROC ($\uparrow$) 
& -- & -- & -- & 0.864 & 0.871 & -- & 0.821 & \underline{0.893} & \textbf{0.931} \\
\cline{2-11}
& AUPRC ($\uparrow$) 
& -- & -- & -- & 0.687 & 0.701 & -- & 0.624 & \underline{0.736} & \textbf{0.804} \\
\cline{2-11}
& REE (\%FS, $\downarrow$) 
& 13.40 & 11.10 & \textbf{6.01} & 7.02 & 6.93 & 6.24 & 8.71 & 7.42 & \underline{6.15} \\
\cline{2-11}
& DG ($\downarrow$) 
& 1.89 & 1.58 & \underline{1.09} & 1.16 & 1.15 & 1.10 & 1.33 & 1.19 & \textbf{1.08} \\
\hline
\end{tabular}
}
\vspace{-15pt}
\end{table*}

\section{Experimental Evaluation}
\label{sec:experiments}

We evaluate the proposed condition-adaptive cloud-edge collaborative framework by answering the following questions:
\begin{enumerate}
    \item How does the proposed framework perform on PV power forecasting accuracy, and how effective is its condition-adaptive routing strategy at balancing this accuracy with system constraints compared to baselines?
    \item How robust is the proposed approach under challenging conditions?
    \item What is the effect of critical hyperparameters on the system's overall performance?
\end{enumerate}

\subsection{Experimental Setup}
\label{subsec:expSetup}

We first present the setup used in our experiments, including the datasets, baselines, models, metrics, and parameters.

\paragraph{Datasets and Evaluation Protocol.} \red{We evaluate on two real-world PV datasets from Shanxi and Hunan, China. Shanxi contains telemetry from 18 inverter nodes over 19 days (5-min power, 10-min weather), and Hunan contains five inverter channels with 1-min power telemetry plus coarser site-level solar/weather covariates. Each inverter/channel is treated as one forecasting node; we align exogenous covariates using only the latest past weather record, remove unmatched/incomplete intervals, and trim boundary samples, resulting in 46,983 samples (Shanxi) and 24,929 samples (Hunan). We use strictly chronological train/validation/test splits in each region (fit/tune/test). In-domain evaluation uses the held-out test block of the same region, while cross-region evaluation trains on one region and tests on the other. To prevent temporal leakage, \(\mathcal{D}^{\mathrm{cld}}\) is built from training windows only, retrieval at time \(t\) uses cases ending before \(t\), and a test window is inserted only after its horizon is revealed. In the data-scarce setting, the site expert is trained on a small local prefix, while the shared retriever and cross-site case base still use full training histories from other sites. Hard-subset evaluation focuses on large power ramps or strong weather mutations, and OOD evaluation follows the validation-calibrated router OOD score.}

\paragraph{Baselines.} To isolate different performance gains, we compare our approach against two categories of baselines, including forecasting baselines and system baselines. For assessing overall end-to-end forecasting capability, we utilize strong models, including 
% iTransformer~\cite{liu2024itransformer}, TimeMixer~\cite{wang2024timemixer,Das2024TimesFM}, TimesFM~\cite{Das2024TimesFM}, Moirai~\cite{Woo2024Moirai}, MOMENT~\cite{goswami2024moment}, Time-LLM~\cite{TimeLLM2024}, TabPFN~\cite{Hollmann2025TabPFN}, 
% iTransformer~\cite{liu2024itransformer},
Moirai~\cite{Woo2024Moirai},
AIRG~\cite{AIRG},
and STKD-PV~\cite{he2025stkdpv}. Additionally, to evaluate the effectiveness of collaboration and routing policies, we compare against system-level baselines, namely Expert-only (ExO), Edge-only (EdO), Cloud-only (CO), Always-cloud-assisted (ACA), and Static-threshold routing (STR).
\paragraph{Model Instantiation.} 
The collaborative architecture employs a parameter-efficient temporal convolutional network as the site-specific expert, alongside a lightweight LimiX model for edge-side causal inference and a high-capacity LimiX model with historical retrieval deployed in the cloud \cite{Zhang2025LimiX}. \red{In the data-scarce setting, the edge-side branch replies only on the reduced local data, whereas the globally shared cloud-side retrieval pool remains training-only and cross-site.} 
% Experiments are implemented in PyTorch and executed on a Linux server equipped with NVIDIA GeForce RTX 3090 GPUs.
% \paragraph{Metrics and Parameters:} System-level performance, including latency, communication overhead, cloud usage, and virtual queue backlog, is profiled using a discrete-event simulator. Forecasting accuracy is evaluated via MAE, RMSE, nMAE, and nRMSE, while routing quality is assessed using AUROC and AUPRC. To quantify robustness under distribution shifts, we report ramp event errors (REE) and the out-of-distribution degradation ratio $DG={\text{Error}_\mathrm{OOD}}/{\text{Error}_\mathrm{ID}}$, where error refers to scale error.
% % \paragraph{Default Settings and Parameters:} 
% All hyperparameters and routing thresholds are strictly tuned on the validation set to satisfy the average cloud budgets. In the default configuration, we set the Lyapunov trade-off parameter $V=80$, the routing calibration scalar $\alpha=1.0$, the long-term cloud-query budget $\rho_{\max}=0.5$, the time-average latency budget $\tau_{\max}=120$, the fixed-point iterations $N_{\mathrm{fp}}=5$, and the retrieval support-set size $K=8$. All virtual queues are initialized to zero at the beginning of the evaluation.
% Experiments are implemented in PyTorch and executed on a Linux server equipped with four NVIDIA GeForce RTX 3090 GPUs.

\begin{figure}[htbp]
\centering
    \includegraphics[width=0.90\linewidth]{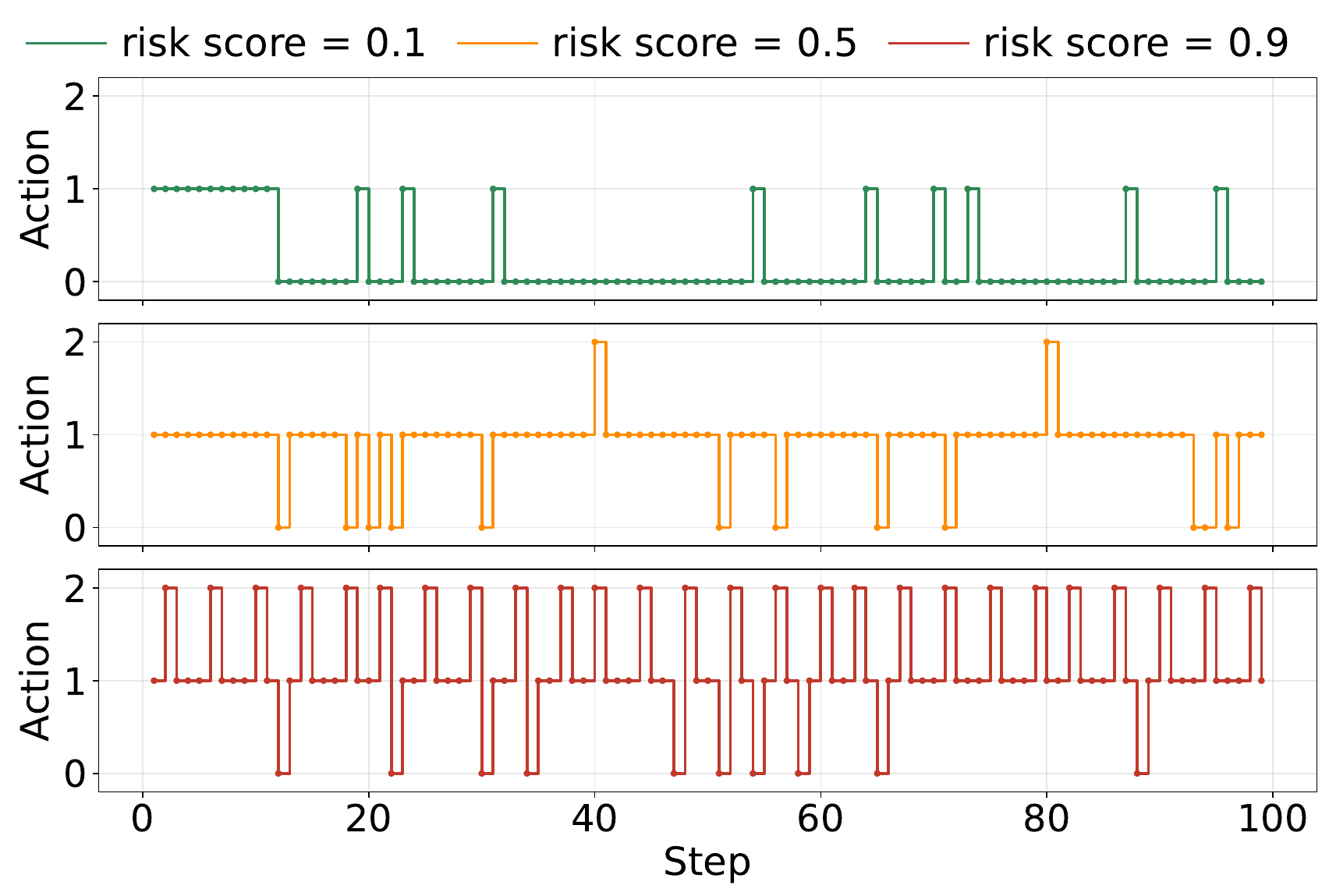}
\vspace{-10pt}\caption{Router action traces at three routing-score levels, where higher scores lead to more frequent escalation to larger models.}\vspace{-10pt}
\label{routing-action}
\end{figure}

\paragraph{Evaluation Metrics and Hyperparameters.} 
System-level performance (latency, communication overhead, cloud usage, and virtual-queue backlog) is measured by a discrete-event simulator. Forecasting accuracy is reported by MAE, RMSE, nMAE, and nRMSE. For adaptive methods, routing quality is reported by AUROC and AUPRC using \(r_{i,t}\) as the ranking score and the oracle cloud-routing decision as the positive label; the oracle label is \(1\) iff mode \(2\) yields the minimum replayed loss among feasible modes. Robustness is evaluated by ramp event error (REE) and the OOD degradation ratio \(DG=\mathrm{Error}_{\mathrm{OOD}}/\mathrm{Error}_{\mathrm{ID}}\), where error is the full-scale normalized forecasting error. OOD/ID partition follows the validation-calibrated OOD threshold, and all methods use the same fixed random seed for preprocessing, splitting, replay construction, and initialization (unless otherwise noted, we report fixed-seed results).
% \paragraph{Default Settings and Parameters:} 
All hyperparameters and routing thresholds are strictly tuned on the validation set to satisfy the average cloud budgets. In the default configuration, we set the Lyapunov trade-off parameter $V=80$, the routing calibration scalar $\alpha=1.0$, the long-term cloud-query budget $\rho_{\max}=0.5$, the time-average latency budget $\tau_{\max}=120$, the fixed-point iterations $N_{\mathrm{fp}}=5$, and the retrieval support-set size $K=8$. All virtual queues are initialized to zero at the beginning of the evaluation.
\red{The scalar \(\alpha\) rescales \(r_{i,t}\) before fitting \(G_{1,i}\), \(G_{2,i}\), and \(F_{i,t}\), and is tuned on the validation set.}

\begin{figure}[htbp]
\centering
    \includegraphics[width=0.98\linewidth]{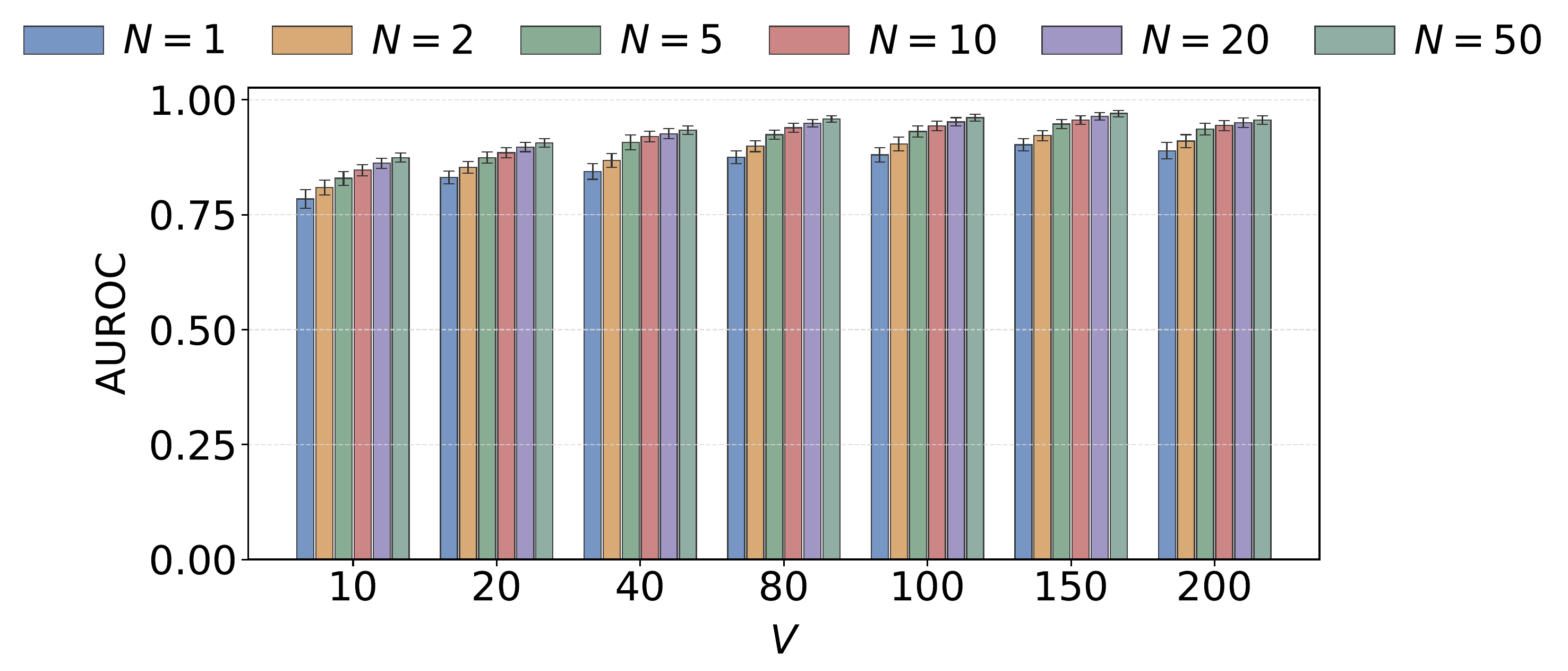}
    \vspace{-10pt}
\caption{\red{Effect of the Lyapunov trade-off parameter $V$ on AUROC under different edge counts $N$.}}
\label{v_plus_n_vs_auroc}
\vspace{-5pt}
\end{figure}

\subsection{Results and Analysis}
\paragraph{Overall Comparison.}
\red{As shown in Table~\ref{tab:performance_hunan_shanxi}, the proposed framework delivers the best overall balance between forecasting accuracy, routing quality, and robustness across the Hunan and Shanxi datasets. On Hunan, our method achieves the best results on all reported metrics, including nMAE, nRMSE, AUROC, AUPRC, REE, and DG, showing that it improves both prediction accuracy and the identification of difficult samples that benefit from cloud assistance. On Shanxi, although the Cloud-only baseline attains the lowest nMAE, nRMSE, and REE, our method remains highly competitive on these error metrics and achieves the best AUROC, AUPRC, and DG. These results indicate that, compared with both fixed-strategy and adaptive-routing baselines, the proposed framework provides a stronger overall trade-off between predictive performance and selective cloud-edge collaboration under system constraints.}

\begin{figure}[t]
\centering
    \includegraphics[width=0.85\linewidth]{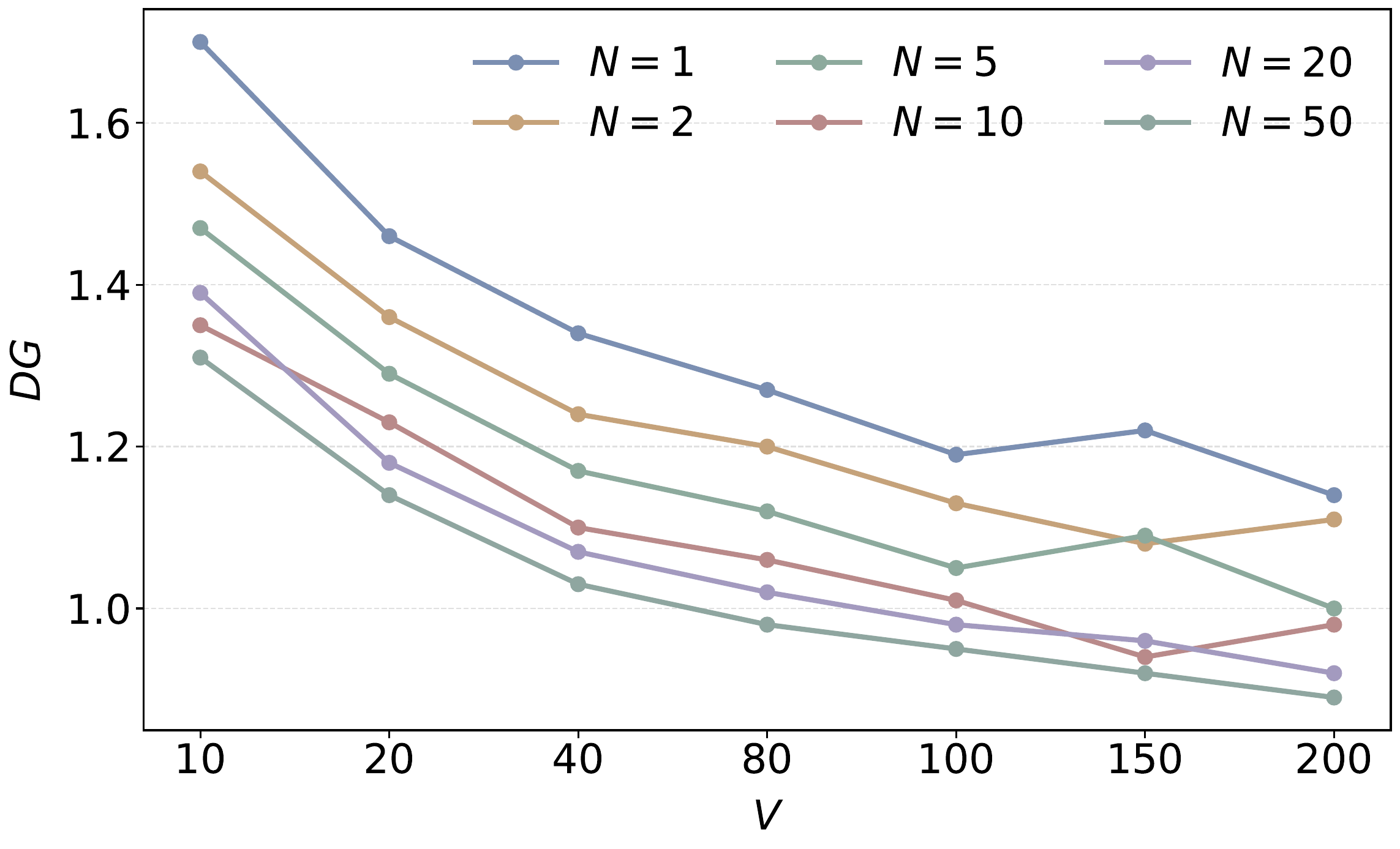}
    \vspace{-10pt}
\caption{\red{Effect of the Lyapunov trade-off parameter $V$ on $DG$ under different edge counts $N$.}}
\label{v_plus_n_vs_dg}
\vspace{-15pt}
\end{figure}

\begin{figure}[htbp]
\centering
    \includegraphics[width=0.98\linewidth]{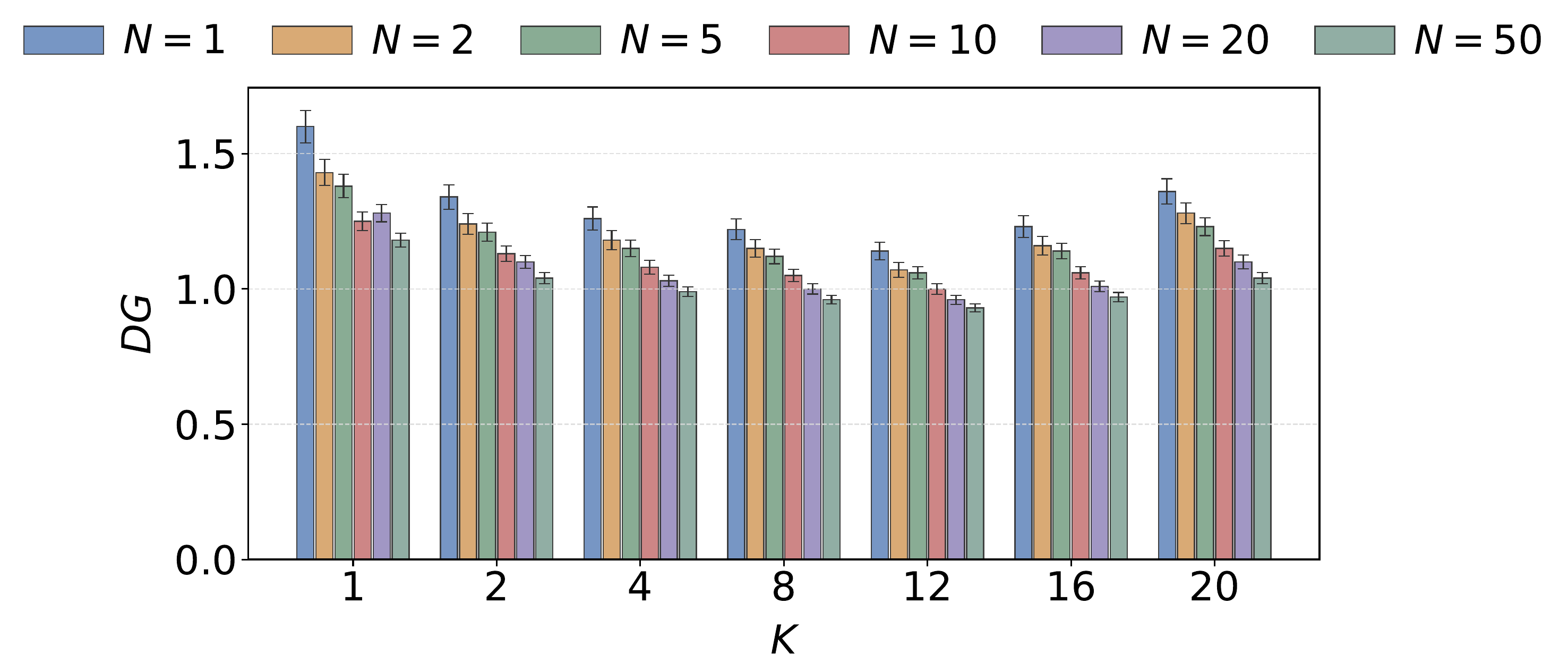}
\vspace{-10pt}\caption{Sensitivity of $DG$ to $K$ under different edge counts $N$.}\vspace{-15pt}
\label{k_plus_n_vs_dg}
\end{figure}
\paragraph{Robustness under Challenging Conditions.}
% \paragraph{Robustness under challenging conditions.}
\red{Beyond average accuracy, the proposed framework also shows strong robustness under challenging conditions. In Table~\ref{tab:performance_hunan_shanxi}, our method achieves the lowest REE and DG on Hunan and the lowest DG with the second-best REE on Shanxi, indicating that it better preserves forecasting quality on difficult and out-of-distribution samples. This robustness is further supported by Fig.~\ref{v_plus_n_vs_auroc} and Fig.~\ref{v_plus_n_vs_dg}, where increasing the Lyapunov trade-off parameter $V$ generally improves AUROC and reduces DG across different edge counts $N$, especially from small to moderate values, suggesting better routing discrimination and stronger robustness to distribution shift. Although mild non-monotonic fluctuations appear at large $V$ in some settings, the overall trend remains favorable. This interpretation is also consistent with the router behavior shown in Fig.~\ref{routing-action}: when the routing score is low, the scheduler mainly keeps inference at the expert level, whereas higher scores trigger more frequent activation of the edge collaborative mode and, when necessary, escalation to the cloud-assisted mode. Such score-dependent transitions indicate that the proposed controller can reserve more expensive resources for harder cases while maintaining stable performance on easier ones.}

\begin{figure}[htbp]
\centering
    \includegraphics[width=0.85\linewidth]{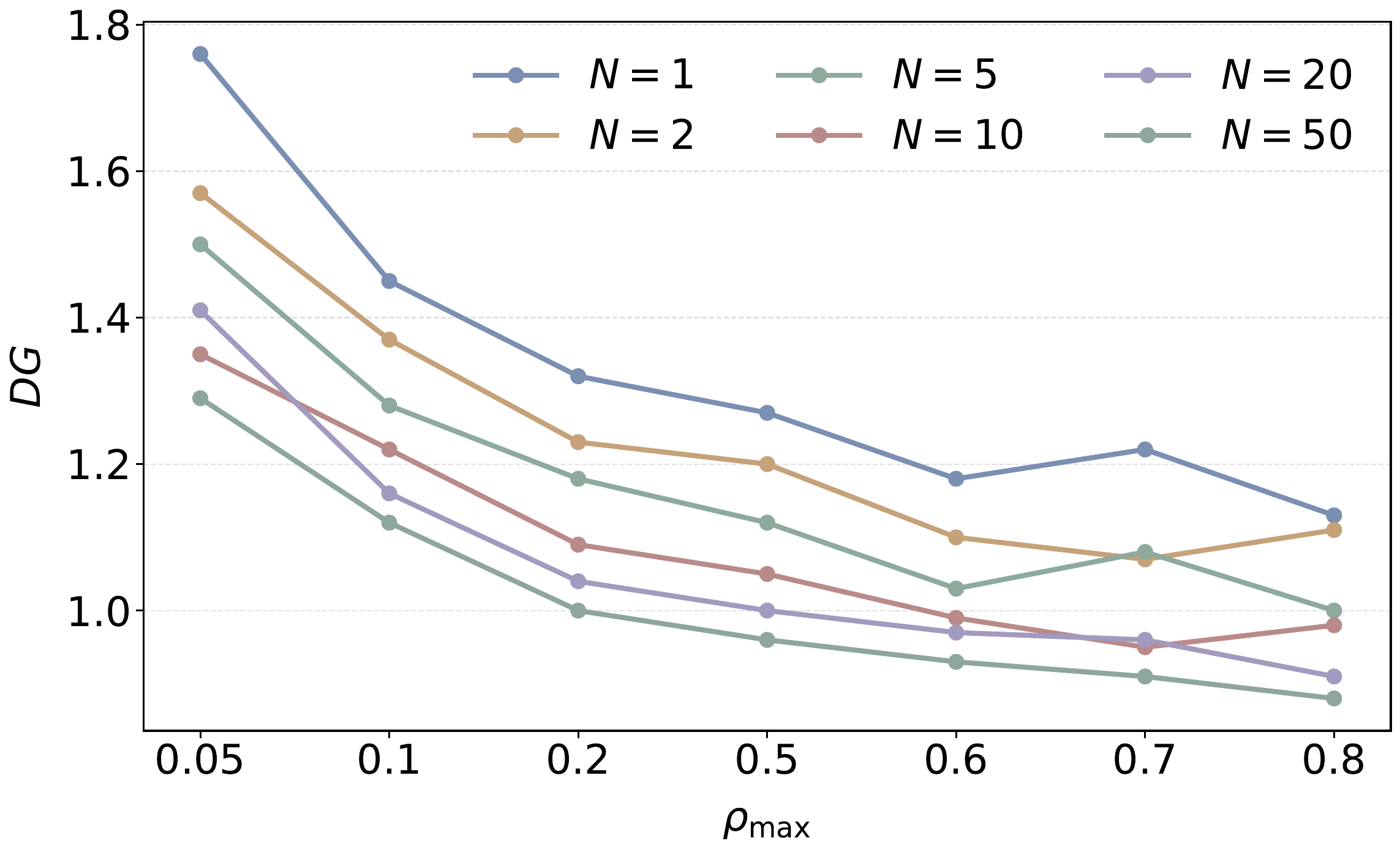}
\vspace{-10pt}\caption{Sensitivity of $DG$ to $\rho_{\max}$ under different edge counts $N$.}\vspace{-10pt}
% \vspace{-5pt}
\label{rho_plus_n_vs_dg}
\end{figure}

\begin{figure}[b]
\centering
    \includegraphics[width=0.85\linewidth]{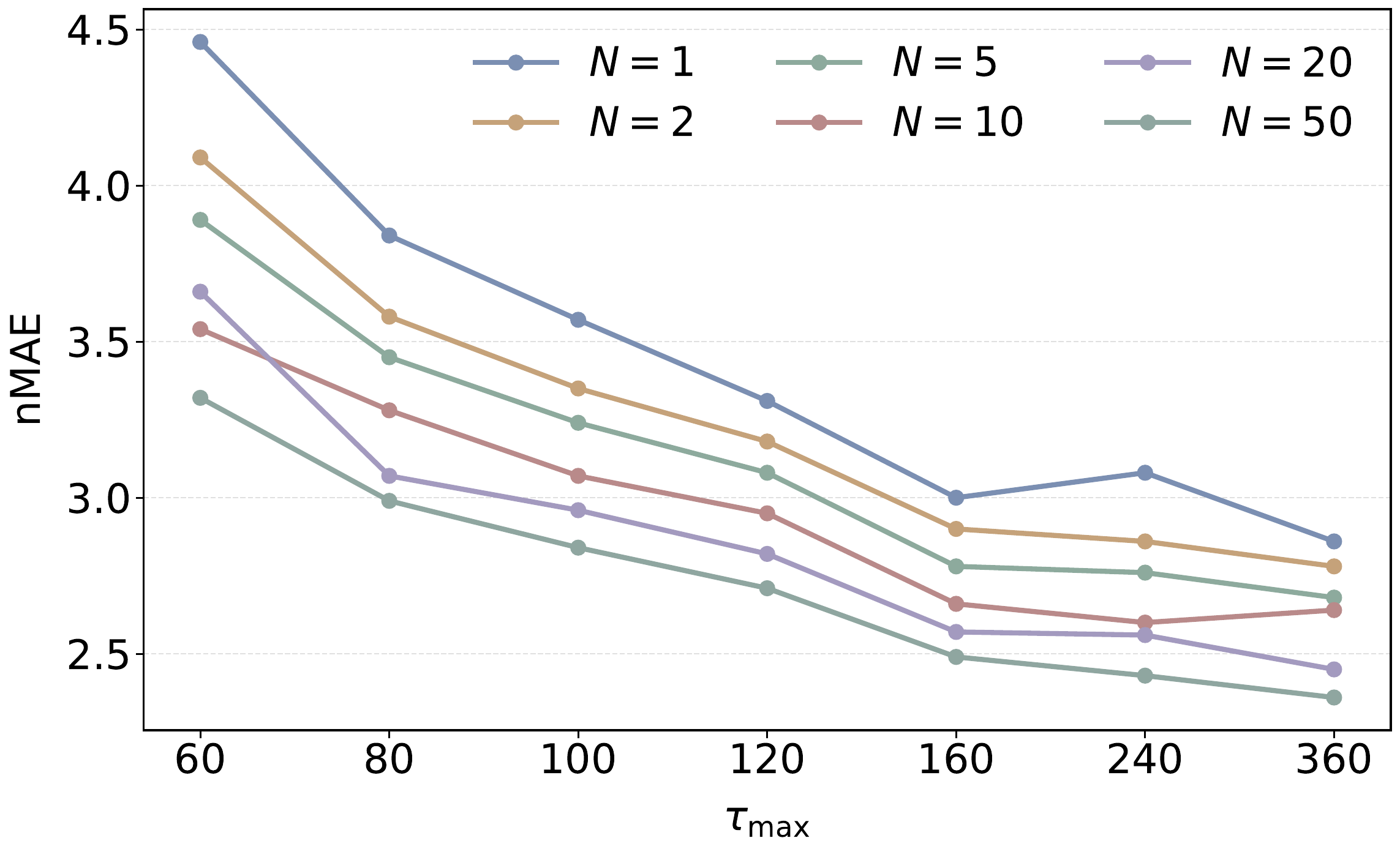}
\vspace{-10pt}\caption{Sensitivity of nMAE to $\tau_{\max}$ under different edge counts $N$.}\vspace{-10pt}
\label{tau_plus_n_vs_nmae}
\end{figure}

\paragraph{Sensitivity to Key Hyperparameters.}
\red{To understand the effect of the remaining critical hyperparameters on overall system performance, we study the sensitivity of the framework to the retrieval support-set size $K$, the cloud budget $\rho_{\max}$, and the latency budget $\tau_{\max}$. Fig.~\ref{k_plus_n_vs_dg} shows that the support-set size $K$ has a clear non-monotonic effect: moderate values, especially around $K=8$ to $12$, yield lower DG by providing sufficiently rich retrieved context, whereas overly small or overly large support sets degrade robustness. As shown in Fig.~\ref{rho_plus_n_vs_dg}, increasing $\rho_{\max}$ generally improves OOD robustness because more difficult samples can be escalated to the cloud-assisted mode, although the improvement gradually saturates at higher budgets. Fig.~\ref{tau_plus_n_vs_nmae} further shows that relaxing the latency budget $\tau_{\max}$ consistently reduces nMAE by allowing stronger collaborative modes to be invoked more often, with diminishing returns as the budget becomes large. Across these studies, larger edge counts $N$ usually lead to better performance, but the gain still depends on the operating region of each hyperparameter. Overall, these suggest that the proposed system is tunable in an interpretable way and benefits more from balanced configurations than from simply maximizing retrieval depth, cloud usage, or latency allowance.}

\section{Conclusion}

In this paper, we propose a condition-adaptive cloud-edge collaborative framework to balance predictive robustness and inference latency in photovoltaic power forecasting. The architecture dynamically integrates a site-specific expert predictor, an edge-side small causal model, and a cloud-side large model. To satisfy long-term latency, communication, and cloud-usage constraints, a screening module selectively invokes cloud-assisted historical retrieval only during challenging scenarios driven by weather mutations or distribution shifts. Ultimately, through confidence-aware fusion, the proposed system effectively leverages large-model priors to achieve high forecasting accuracy under complex conditions while preserving the speed of local edge execution.

% %%@@@@@@@@@@@@@@@@@@@@@@@@@@@@@@@@@@@@@@@@@@@@
% \acknowledgements{This work was supported by the Project Name one (XX, XXX), the Project Name two (XXXXXXXXXXXX), the Project Name three (XXXXXXXXX), and the Project Name four (XXXXXXXX). Acknowledgments are a crucial element of any published piece of work, be it professional, fictional, non-fictional, or academic. The acknowledgment section is dedicated to thanking the people that helped the author put together the writing. For example, in a thesis, the acknowledgment page will recognize the efforts of lecturers, lab assistants, and librarians that helped the writer re-search their findings.}

\appendix

% \section{Notation}\label{app:notation}

\section{Auxiliary Results for the System Model and Problem Formulation}
\label{app:system_model_aux}

\begin{proof}[proof of Proposition \ref{prop:latent_decomp}]
By the law of total probability,
\[
p(\mathbf y^{*}\mid \mathbf X^{\mathrm{loc}},\mathcal S)
=
\int
p(\mathbf y^{*}\mid \mathbf X^{\mathrm{loc}},\mathcal S,\xi)\,
p(\xi\mid \mathbf X^{\mathrm{loc}},\mathcal S)\,d\xi.
\]
Using
\(
\mathbf y^{*}\perp\!\!\!\perp \mathcal S
\mid
(\mathbf X^{\mathrm{loc}},\xi)
\),
we obtain
\[
p(\mathbf y^{*}\mid \mathbf X^{\mathrm{loc}},\mathcal S)
=
\int
p(\mathbf y^{*}\mid \mathbf X^{\mathrm{loc}},\xi)\,
p(\xi\mid \mathbf X^{\mathrm{loc}},\mathcal S)\,d\xi.
\]
\end{proof}

\begin{lemma}
% [Convexity of the action-wise fusion step]
\label{lem:convex_fusion}
For any fixed action \(a\in\mathcal{A}\), the problem
\begin{equation}
\min_{\mathbf{w}\in\Delta_{|\mathcal{M}_{a}|}}
\ell_{i,t}^{(a)}(\mathbf{w})
\end{equation}
is convex for MAE, weighted MAE, Huber loss, and squared loss on \(\mathcal{Y}\).
\end{lemma}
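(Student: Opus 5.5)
The argument has two parts: first show that the fused forecast is affine in $\mathbf{w}$, then show that each named loss is convex in its prediction argument on the relevant domain. Convexity of the composition and of the feasible set then gives a convex program.

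\textbf{Affine fused forecast.} Fix $a\in\mathcal{A}$ and define $A:\Delta_{|\mathcal{M}_a|}\to\mathbb{R}^H$ by
\[
A(\mathbf{w})=\sum_{m\in\mathcal{M}_a}w_m\hat{\mathbf{y}}_{i,t}^{m},
\]
where the candidates $\hat{\mathbf{y}}_{i,t}^{m}$ are fixed once slot $t$ is given. This map is linear in $\mathbf{w}$. Its image is the convex hull of the candidates, which lies in $\mathcal{Y}=[0,1]^H$ because $\mathcal{Y}$ is convex. The case $a=0$ is trivial: there $\Delta_1=\{1\}$, so the problem is a single-point minimization.

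\textbf{Convexity of each loss.} Write $\ell(\mathbf{y},\hat{\mathbf{y}})=\sum_h \lambda_h\,\varrho(y_h-\hat y_h)$ with weights $\lambda_h\ge 0$. The scalar function $\varrho$ is convex in each case:
\begin{itemize}
\item MAE and weighted MAE: $\varrho(u)=|u|$.
\item Squared loss: $\varrho(u)=u^2$.
\item Huber loss: $\varrho(u)=u^2/2$ for $|u|\le\delta_H$ and $\delta_H|u|-\delta_H^2/2$ otherwise. It is convex because it is $C^1$ with nondecreasing derivative $\mathrm{clip}(u,-\delta_H,\delta_H)$, or equivalently because it is the infimal convolution of $|\cdot|$ and $(\cdot)^2/2$.
\end{itemize}
Composing a convex function with the affine map $\hat y_h\mapsto y_h-\hat y_h$ preserves convexity, and a nonnegative weighted sum of convex functions is convex. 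Hence $\hat{\mathbf{y}}\mapsto\ell(\mathbf{y}_{i,t}^*,\hat{\mathbf{y}})$ is convex on $\mathbb{R}^H$, and in particular on $\mathcal{Y}$.

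\textbf{Conclusion.} The objective $\ell_{i,t}^{(a)}=\ell(\mathbf{y}^*,\cdot)\circ A$ is a convex function composed with an affine map, so it is convex in $\mathbf{w}$; the chord inequality can be checked directly. The feasible set $\Delta_{|\mathcal{M}_a|}$ is convex and compact, being defined by linear equalities and inequalities. The loss is continuous, so a minimizer exists by Weierstrass. Therefore the problem is a convex program. This also yields Lemma~\ref{lem:convex_fusion_main} as a special case.

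No step here is a genuine obstacle. The only point needing care is Huber convexity across the breakpoints $|u|=\delta_H$, which the monotone-derivative argument settles. One should also state explicitly that convexity is needed only on the convex hull of the candidates. That condition holds automatically here, since each loss is globally convex.
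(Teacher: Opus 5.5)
Your proof is correct and follows the paper's argument: the fused output is affine in $\mathbf{w}$, the named losses are convex in the prediction argument, and their composition is therefore convex in $\mathbf{w}$; you also write out the per-loss checks and the Huber breakpoint, which the paper omits. One minor slip: the infimal-convolution description of Huber should use $\delta_H|\cdot|$ rather than $|\cdot|$, unless $\delta_H=1$, though your monotone-derivative argument already establishes convexity.
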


\begin{proof}
The fused output in Eq.~\eqref{eq:fused_output} is an affine mapping of \(\mathbf{w}\). For the loss families listed above, the map from the prediction argument to the realized loss is convex on \(\mathcal{Y}\). The composition of a convex function with an affine mapping is therefore convex in \(\mathbf{w}\).
\end{proof}

\begin{lemma}[Queue stability implies constraint satisfaction]
\label{lem:queue_stability}
If the virtual queues \(Q_{\tau}(t)\), \(Q_{c}(t)\), and \(Q_{\rho}(t)\) are mean-rate stable, i.e.,
\begin{equation}
\lim_{T\to\infty}
\frac{\mathbb{E}[Q_{g}(T)]}{T}=0,
\qquad
g\in\{\tau,c,\rho\},
\end{equation}
then the corresponding time-average constraints in Eq.~\eqref{prob:P1} are satisfied.
\end{lemma}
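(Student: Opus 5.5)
The plan is the standard telescoping argument for virtual queues. From the update in Eq.~\eqref{eq:q_tau} and the inequality $[x]^+\ge x$, every slot satisfies
\[
Q_{\tau}(t+1)\ge Q_{\tau}(t)+\frac{1}{N}\sum_{i=1}^{N}\tau_i^{(a_{i,t})}\big(\rho(t)\big)-\tau_{\max},
\]
and the same holds for $Q_c$ and $Q_\rho$ with their own arrivals and service rates $c_{\max}$, $\rho_{\max}$. I would write the argument once, for a generic queue $Q_g$ with arrival $A_g(t)$ and service $b_g$, and then instantiate it three times.

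\textbf{Step 1 (telescoping).} Sum the one-slot inequality over $t=0,\ldots,T-1$. The intermediate terms cancel, leaving
\[
Q_g(T)-Q_g(0)\ge \sum_{t=0}^{T-1}A_g(t)-T b_g .
\]

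\textbf{Step 2 (expectations and normalization).} Take expectations and divide by $T$:
\[
\frac{1}{T}\sum_{t=0}^{T-1}\mathbb{E}[A_g(t)]\le b_g+\frac{\mathbb{E}[Q_g(T)]}{T}-\frac{\mathbb{E}[Q_g(0)]}{T}.
\]
Taking expectations is legitimate because Eq.~\eqref{eq:bounded_slot} gives uniformly bounded arrivals, so every expectation is finite and the expectation can be moved inside the finite sum. The initial queues are finite (zero in the implementation), so $\mathbb{E}[Q_g(0)]/T\to 0$. Since $Q_g(0)\ge 0$, one may also simply drop the last term.

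\textbf{Step 3 (limsup).} Take $\limsup_{T\to\infty}$ of both sides. The left-hand side is exactly $\bar{\tau}$, $\bar{c}$ or $\bar{\rho}$ by definition. The right-hand side tends to $b_g$ by mean-rate stability, because $\lim_{T\to\infty}\mathbb{E}[Q_g(T)]/T=0$. This gives $\bar{\tau}\le\tau_{\max}$, $\bar c\le c_{\max}$ and $\bar\rho\le\rho_{\max}$.

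The only point needing care is the limsup bookkeeping: the left side is a limsup, and the right side has a genuine limit equal to $b_g$. So I would use $\limsup(x_T+y_T)\le \limsup x_T+\lim y_T$, applied to the right-hand side. I do not expect a real obstacle here.
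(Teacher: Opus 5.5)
Your proposal is correct and follows essentially the same argument as the paper: use $[x]^+\ge x$ to get the one-slot lower bound, telescope over $t=0,\ldots,T-1$, take expectations, divide by $T$, and pass to the $\limsup$ using mean-rate stability. The only cosmetic difference is that you handle the $Q_g(0)$ term explicitly, whereas the paper silently drops it using $Q_g(0)\ge 0$ (as you note is also possible).
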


\begin{proof}
From Eq.~\eqref{eq:q_tau},
\begin{equation}
Q_{\tau}(t+1)
\ge
Q_{\tau}(t)
+
\frac{1}{N}\sum_{i=1}^{N}\tau_{i}^{(a_{i,t})}\big(\rho(t)\big)
-
\tau_{\max}.
\end{equation}
Summing the above inequality over \(t=0,\ldots,T-1\), taking expectations, and dividing by \(T\) gives
\begin{equation}
\frac{1}{T}\sum_{t=0}^{T-1}
\mathbb{E}\!\left[
\frac{1}{N}\sum_{i=1}^{N}\tau_{i}^{(a_{i,t})}\big(\rho(t)\big)
\right]
\le
\tau_{\max}
+
\frac{\mathbb{E}[Q_{\tau}(T)]}{T}.
\end{equation}
Taking \(\limsup_{T\to\infty}\) and using mean-rate stability yields \(\bar{\tau}\le\tau_{\max}\). The same argument applied to Eq.~\eqref{eq:q_c} and Eq.~\eqref{eq:q_rho} gives \(\bar{c}\le c_{\max}\) and \(\bar{\rho}\le \rho_{\max}\).
\end{proof}

\section{Proofs for Routing, Fusion, and Long-Term Guarantees}
\label{app:proofs_method}

\subsection{Proof of the Routing Threshold Theorem}
\label{app:proof_routing}

\begin{proof}
Fix node \(i\), slot \(t\), routing score \(r\), and cloud-load estimate \(\rho\in[0,1]\). Define the conditional surrogate drift-plus-penalty objective
\begin{align}
\bar{\Psi}_{i,t}(a;r,\rho)
&=
V\,
\mathbb{E}\!\left[
\widehat{\ell}_{i,t}^{(a)}
\,\middle|\,
r_{i,t}=r
\right]
+
Q_{\tau}(t)\tau_i^{(a)}(\rho)
\\
&\quad 
+
Q_c(t)c_i^{(a)}
+
Q_{\rho}(t)\mathbb{I}\{a=2\},
\quad a\in\{0,1,2\},
\label{eq:app_psi}
\end{align}
with the convention \(\tau_i^{(0)}(\rho)=\tau_i^{(0)}\) and \(\tau_i^{(1)}(\rho)=\tau_i^{(1)}\).
Subtracting the common baseline
\(
V\mathbb{E}[\widehat{\ell}_{i,t}^{(0)}\mid r_{i,t}=r]
+
Q_{\tau}(t)\tau_i^{(0)}
\)
does not change the minimizer. Since \(V>0\), minimizing \(\bar{\Psi}_{i,t}(a;r,\rho)\) is equivalent to minimizing \(\{J_{i,t}^{(0)},J_{i,t}^{(1)}(r),J_{i,t}^{(2)}(r,\rho)\}\).
We now compute the pairwise differences.

\textbf{Action \(1\) versus action \(0\).}
By Eq.~\eqref{eq:G1_main},
\begin{align}
\bar{\Psi}_{i,t}(1;r,\rho)-\bar{\Psi}_{i,t}(0;r,\rho)
&=
Q_{\tau}(t)\big(\tau_i^{(1)}-\tau_i^{(0)}\big)
-
V G_{1,i}(r)
\nonumber\\
&=
V J_{i,t}^{(1)}(r).
\label{eq:app_diff10}
\end{align}
Define the generalized threshold
\begin{equation}
\theta_{01,i}(t)
=
\inf
\left\{
r\in[0,1]:
G_{1,i}(r)\ge \frac{\kappa_{i,t}^{(1)}}{V}
\right\}.
\label{eq:app_theta01}
\end{equation}
Because \(G_{1,i}(r)\) is continuous and nondecreasing, we have
\[
J_{i,t}^{(1)}(r)\ge 0 \iff r<\theta_{01,i}(t),
\qquad
J_{i,t}^{(1)}(r)\le 0 \iff r\ge \theta_{01,i}(t).
\]

\textbf{Action \(2\) versus action \(0\).}
By Eq.~\eqref{eq:G1_main}--\eqref{eq:G2_main},
\begin{align}
&\bar{\Psi}_{i,t}(2;r,\rho)-\bar{\Psi}_{i,t}(0;r,\rho)
\nonumber\\
=&
Q_{\tau}(t)\big(\tau_i^{(2)}(\rho)-\tau_i^{(0)}\big)
+
Q_c(t)\kappa_i
+
Q_{\rho}(t)
-
V\bigl(G_{1,i}(r)+G_{2,i}(r)\bigr)
\nonumber\\
=&
V J_{i,t}^{(2)}(r,\rho).
\label{eq:app_diff20}
\end{align}
Define
\begin{equation}
\theta_{02,i}(t,\rho)
=
\inf
\left\{
r\in[0,1]:
G_{1,i}(r)+G_{2,i}(r)
\ge
\frac{\kappa_{i,t}^{(1)}+\kappa_{i,t}^{(2)}(\rho)}{V}
\right\}.
\label{eq:app_theta02}
\end{equation}
Again, continuity and monotonicity imply
\begin{align}
J_{i,t}^{(2)}(r,\rho)\ge 0 \iff r<\theta_{02,i}(t,\rho),
\\
J_{i,t}^{(2)}(r,\rho)\le 0 \iff r\ge \theta_{02,i}(t,\rho).
\end{align}

\textbf{Action \(2\) versus action \(1\).}
Subtracting Eq.~\eqref{eq:app_diff10} from Eq.~\eqref{eq:app_diff20} yields
\begin{align}
&\bar{\Psi}_{i,t}(2;r,\rho)-\bar{\Psi}_{i,t}(1;r,\rho)
\nonumber\\
=&
Q_{\tau}(t)\big(\tau_i^{(2)}(\rho)-\tau_i^{(1)}\big)
+
Q_c(t)\kappa_i
+
Q_{\rho}(t)
-
V G_{2,i}(r)
\nonumber\\
=&
V\left(
\frac{\kappa_{i,t}^{(2)}(\rho)}{V}-G_{2,i}(r)
\right).
\label{eq:app_diff21}
\end{align}
Define
\begin{equation}
\theta_{12,i}(t,\rho)
=
\inf
\left\{
r\in[0,1]:
G_{2,i}(r)\ge \frac{\kappa_{i,t}^{(2)}(\rho)}{V}
\right\}.
\label{eq:app_theta12}
\end{equation}
Then
\[
J_{i,t}^{(2)}(r,\rho)\le J_{i,t}^{(1)}(r)
\iff
r\ge \theta_{12,i}(t,\rho).
\]

We can now characterize the minimizer.

\textbf{Action \(0\).}
Action \(0\) is optimal if and only if
\[
J_{i,t}^{(1)}(r)\ge 0
\quad\text{and}\quad
J_{i,t}^{(2)}(r,\rho)\ge 0,
\]
which is equivalent to
\[
r<\theta_{01,i}(t)
\quad\text{and}\quad
r<\theta_{02,i}(t,\rho).
\]

\textbf{Action \(1\).}
Action \(1\) is optimal if and only if
\[
J_{i,t}^{(1)}(r)\le 0
\quad\text{and}\quad
J_{i,t}^{(1)}(r)\le J_{i,t}^{(2)}(r,\rho),
\]
which is equivalent to
\[
r\ge \theta_{01,i}(t)
\quad\text{and}\quad
r<\theta_{12,i}(t,\rho).
\]

\textbf{Action \(2\).}
Action \(2\) is optimal if and only if
\[
J_{i,t}^{(2)}(r,\rho)\le 0
\quad\text{and}\quad
J_{i,t}^{(2)}(r,\rho)\le J_{i,t}^{(1)}(r),
\]
which is equivalent to
\[
r\ge \theta_{02,i}(t,\rho)
\quad\text{and}\quad
r\ge \theta_{12,i}(t,\rho).
\]
Consequently, the minimizer is \(a_{i,t}=\arg\min_{a\in\{0,1,2\}}J_{i,t}^{(a)}\), and the induced routing policy is threshold-based in the routing score. Some regions may be empty, depending on the queue state and the node-specific costs.
Finally, the cloud-assisted branch is optimal if and only if
\[
r\ge \max\{\theta_{02,i}(t,\rho),\theta_{12,i}(t,\rho)\}.
\]
Therefore, we obtain
\begin{equation}
\theta_{c,i}(t,\rho)
=
\max\{\theta_{02,i}(t,\rho),\theta_{12,i}(t,\rho)\},
\end{equation}
which is consistent with Eq.~\eqref{eq:theta_c_main}. This completes the proof.
\end{proof}

\subsection{Proof of the Mean-Field Cloud-Load Equilibrium Proposition}
\label{app:proof_mfe}

\begin{proof}
Define the map
\begin{equation}
\mathcal{T}_t(\rho)
=
\frac{1}{N}
\sum_{i=1}^{N}
\left[
1-F_{i,t}\!\left(\theta_{c,i}(t,\rho)\right)
\right],
\qquad \rho\in[0,1].
\label{eq:app_Tmap}
\end{equation}
Since \(F_{i,t}(\cdot)\) and \(\theta_{c,i}(t,\rho)\) are continuous for every \(i\), the map \(\mathcal{T}_t(\rho)\) is continuous on \([0,1]\). Moreover, each term in the sum belongs to \([0,1]\), hence
\[
0\le \mathcal{T}_t(\rho)\le 1,
\qquad \forall \rho\in[0,1].
\]
Thus, \(\mathcal{T}_t\) maps the compact convex set \([0,1]\) into itself. By Brouwer's fixed-point theorem, there exists at least one \(\rho^{*}(t)\in[0,1]\) such that
\[
\rho^{*}(t)=\mathcal{T}_t(\rho^{*}(t)).
\]
This proves existence.
To show uniqueness, suppose in addition that \(F_{i,t}\) is differentiable with density \(f_{i,t}\), and \(\theta_{c,i}(t,\rho)\) is differentiable in \(\rho\). Let \(\rho_1,\rho_2\in[0,1]\). Then
\begin{align}
\left|\mathcal{T}_t(\rho_1)-\mathcal{T}_t(\rho_2)\right|
&\le
\frac{1}{N}
\sum_{i=1}^{N}
\left|
F_{i,t}\!\left(\theta_{c,i}(t,\rho_2)\right)
-
F_{i,t}\!\left(\theta_{c,i}(t,\rho_1)\right)
\right|.
\label{eq:app_mfe_step1}
\end{align}
By the mean-value theorem, for each \(i\) there exists a point \(\xi_i\) between \(\theta_{c,i}(t,\rho_1)\) and \(\theta_{c,i}(t,\rho_2)\) such that
\[
\left|
F_{i,t}\!\left(\theta_{c,i}(t,\rho_2)\right)
-
F_{i,t}\!\left(\theta_{c,i}(t,\rho_1)\right)
\right|
=
f_{i,t}(\xi_i)\,
\left|
\theta_{c,i}(t,\rho_2)-\theta_{c,i}(t,\rho_1)
\right|.
\]
Applying the mean-value theorem again to \(\theta_{c,i}(t,\rho)\), there exists \(\zeta_i\) between \(\rho_1\) and \(\rho_2\) such that
\[
\left|
\theta_{c,i}(t,\rho_2)-\theta_{c,i}(t,\rho_1)
\right|
=
\left|
\frac{\partial \theta_{c,i}(t,\zeta_i)}{\partial \rho}
\right|
|\rho_2-\rho_1|.
\]
Therefore,
\begin{align}
&\left|\mathcal{T}_t(\rho_1)-\mathcal{T}_t(\rho_2)\right|\nonumber\\
\le&
\frac{1}{N}
\sum_{i=1}^{N}
\left(
\sup_{r\in[0,1]}f_{i,t}(r)
\right)
\left(
\sup_{\rho\in[0,1]}
\left|
\frac{\partial \theta_{c,i}(t,\rho)}{\partial \rho}
\right|
\right)
|\rho_1-\rho_2|.
\label{eq:app_mfe_lipschitz}
\end{align}
Thus, if
\begin{equation}
\frac{1}{N}
\sum_{i=1}^{N}
\left(
\sup_{r\in[0,1]}f_{i,t}(r)
\right)
\left(
\sup_{\rho\in[0,1]}
\left|
\frac{\partial \theta_{c,i}(t,\rho)}{\partial \rho}
\right|
\right)
<1,
\label{eq:app_mfe_contraction}
\end{equation}
then \(\mathcal{T}_t\) is a contraction on \([0,1]\). By Banach's contraction theorem, the fixed point is unique.
\end{proof}

\subsection{Proof of the Retrieval-Quality Loss-Gap Proposition}
\label{app:proof_retrieval_gap}

\begin{proof}
\blue{For notational brevity, write \(\mathbf{X}=\mathbf{X}_{i,t}^{\mathrm{loc}}\), \(\mathbf{y}^{*}=\mathbf{y}_{i,t}^{*}\), \(\mathbf{z}=\mathbf{z}_{i,t}\), and \(\mathbf{z}^{\star}=\mathbf{z}_{i,t}^{\star}\). By the \(L_{\ell}\)-Lipschitz continuity of the loss in its prediction argument,}
\begin{equation}
\left|
\ell\!\left(
\mathbf{y}^{*},
g_{\vartheta}^{c}(\mathbf{X},\mathbf{z})
\right)
-
\ell\!\left(
\mathbf{y}^{*},
g^{c,\star}(\mathbf{X},\mathbf{z}^{\star})
\right)
\right|
\le
L_{\ell}
\left\|
g_{\vartheta}^{c}(\mathbf{X},\mathbf{z})
-
g^{c,\star}(\mathbf{X},\mathbf{z}^{\star})
\right\|.
\label{eq:app_retgap_1}
\end{equation}
\blue{Applying the triangle inequality gives}
\begin{align}
&\left\|
g_{\vartheta}^{c}(\mathbf{X},\mathbf{z})
-
g^{c,\star}(\mathbf{X},\mathbf{z}^{\star})
\right\|\\
\le&
\left\|
g_{\vartheta}^{c}(\mathbf{X},\mathbf{z})
-
g^{c,\star}(\mathbf{X},\mathbf{z})
\right\|
+
\left\|
g^{c,\star}(\mathbf{X},\mathbf{z})
-
g^{c,\star}(\mathbf{X},\mathbf{z}^{\star})
\right\|.
\label{eq:app_retgap_2}
\end{align}
\blue{The first term is upper bounded by \(\varepsilon_{\mathrm{pred}}\) from Eq.~\eqref{eq:pred_approx_main}, and the second term is upper bounded by \(L_z\|\mathbf{z}-\mathbf{z}^{\star}\|\) by the \(L_z\)-Lipschitz continuity of \(g^{c,\star}\) in \(\mathbf{z}\). Thus, we have}
\begin{equation}
\left\|
g_{\vartheta}^{c}(\mathbf{X},\mathbf{z})
-
g^{c,\star}(\mathbf{X},\mathbf{z}^{\star})
\right\|
\le
\varepsilon_{\mathrm{pred}}
+
L_z\|\mathbf{z}-\mathbf{z}^{\star}\|.
\label{eq:app_retgap_3}
\end{equation}
\blue{Combining Eq.~\eqref{eq:app_retgap_1} and Eq.~\eqref{eq:app_retgap_3} proves Eq.~\eqref{eq:retrieval_gap_bound_main}.}
\end{proof}

\subsection{Proof of the Inexactness Decomposition Proposition}
\label{app:proof_delta_decomp}

\begin{proof}
\blue{Let \(\Delta_{i,t}^{\mathrm{cld}}\) denote the additional loss gap induced by the factorized cloud-assisted implementation relative to the oracle retrieval-assisted predictor, i.e.,}
\begin{equation}
\Delta_{i,t}^{\mathrm{cld}}
=
\left|
\ell\!\left(
\mathbf{y}_{i,t}^{*},
g_{\vartheta}^{c}\!\left(
\mathbf{X}_{i,t}^{\mathrm{loc}},
\mathbf{z}_{i,t}
\right)
\right)
-
\ell\!\left(
\mathbf{y}_{i,t}^{*},
g^{c,\star}\!\left(
\mathbf{X}_{i,t}^{\mathrm{loc}},
\mathbf{z}_{i,t}^{\star}
\right)
\right)
\right|.
\label{eq:app_delta_cld_def}
\end{equation}
\blue{When \(a_{i,t}\neq 2\), this contribution is zero. When \(a_{i,t}=2\), Proposition~\ref{prop:retrieval_gap_main} and Eq.~\eqref{eq:ret_uniform_main} imply}
\begin{equation}
\Delta_{i,t}^{\mathrm{cld}}
\le
L_{\ell}
\left(
\varepsilon_{\mathrm{pred}}
+
L_z\varepsilon_{\mathrm{ret}}
\right).
\label{eq:app_delta_cloud}
\end{equation}
\blue{Since the drift-plus-penalty objective multiplies the one-slot average loss by \(V\), the maximal additive contribution of this architectural inexactness to the right-hand side of the one-slot drift-plus-penalty bound is}
\begin{equation}
V L_{\ell}
\left(
\varepsilon_{\mathrm{pred}}
+
L_z\varepsilon_{\mathrm{ret}}
\right).
\label{eq:app_delta_scaled}
\end{equation}
\blue{Adding the surrogate-calibration contribution \(\delta_{\mathrm{sur}}\) and the finite fixed-point iteration contribution \(\delta_{\mathrm{fp}}\) yields Eq.~\eqref{eq:delta_decomp_main}.}
\end{proof}

\subsection{Proof of the Closed-Form Fusion Update}
\label{app:proof_closed_form}

\begin{proof}
Fix a mode \(a\in\{1,2\}\). Consider the optimization problem
\begin{equation}
\min_{\mathbf{w}\in\Delta_{|\mathcal{M}_a|}}
\left\{
\eta\langle \boldsymbol{\Gamma}_{i,t-1}^{(a)},\mathbf{w}\rangle
+
D_{\mathrm{KL}}(\mathbf{w}\,\|\,\bar{\boldsymbol{\pi}}_i^{(a)})
\right\}.
\label{eq:app_ftrl}
\end{equation}
Since \(\bar{\pi}_{i,m}^{(a)}>0\) for every \(m\in\mathcal{M}_a\), the KL term is well-defined on the simplex, and the objective is strictly convex. Consequently, the optimizer is unique.
The Lagrangian is
\begin{align}
\mathcal{L}(\mathbf{w},\lambda)
=&
\eta\sum_{m\in\mathcal{M}_a}\Gamma_{i,m}^{(a)}(t-1)w_m\\
+&
\sum_{m\in\mathcal{M}_a}
w_m\log\frac{w_m}{\bar{\pi}_{i,m}^{(a)}}
+
\lambda\left(\sum_{m\in\mathcal{M}_a}w_m-1\right).\nonumber
\end{align}
Taking the derivative with respect to \(w_m\) and setting it to zero yields
\begin{equation}
\eta\Gamma_{i,m}^{(a)}(t-1)
+
\log\frac{w_m}{\bar{\pi}_{i,m}^{(a)}}
+
1+\lambda
=
0.
\end{equation}
Thus
\begin{equation}
w_m
=
\bar{\pi}_{i,m}^{(a)}
\exp\!\left(-1-\lambda-\eta\Gamma_{i,m}^{(a)}(t-1)\right).
\end{equation}
Using the simplex constraint \(\sum_m w_m=1\), we obtain
\begin{equation}
\exp(-1-\lambda)
=
\left[
\sum_{n\in\mathcal{M}_a}
\bar{\pi}_{i,n}^{(a)}
\exp\!\left(-\eta\Gamma_{i,n}^{(a)}(t-1)\right)
\right]^{-1}.
\end{equation}
Substituting this back gives
\begin{equation}
w_{i,m}^{(a)}(t)
=
\frac{
\bar{\pi}_{i,m}^{(a)}
\exp\!\left(-\eta\Gamma_{i,m}^{(a)}(t-1)\right)
}{
\sum_{n\in\mathcal{M}_a}
\bar{\pi}_{i,n}^{(a)}
\exp\!\left(-\eta\Gamma_{i,n}^{(a)}(t-1)\right)
},
\end{equation}
which is exactly Eq.~\eqref{eq:omd_closed_form_main}.
\end{proof}

\subsection{Proof of the Fusion Regret Theorem}
\label{app:proof_regret}

\begin{proof}
Fix a node \(i\) and a mode \(a\in\{1,2\}\). Let
\[
\widetilde{g}_k(\mathbf{w}),\qquad k=1,\ldots,R,
\]
be the sequence of revealed mode-\(a\) losses processed by the online updater, indexed in chronological order of label arrival. Let
\[
\widetilde{\mathbf{w}}_k
=
\arg\min_{\mathbf{w}\in\Delta_{|\mathcal{M}_a|}}
\left\{
\eta
\left\langle
\sum_{j=1}^{k-1}\widetilde{\mathbf{g}}_j,\mathbf{w}
\right\rangle
+
D_{\mathrm{KL}}\!\left(
\mathbf{w}\,\|\,\bar{\boldsymbol{\pi}}_i^{(a)}
\right)
\right\},
\]
where \(\widetilde{\mathbf{g}}_k\in\partial \widetilde{g}_k(\widetilde{\mathbf{w}}_k)\). This is exactly the entropic FTRL process induced by Eq.~\eqref{eq:ftrl_main} on the revealed-loss sequence.
For notational simplicity, abbreviate
\[
\bar{\boldsymbol{\pi}}
\triangleq
\bar{\boldsymbol{\pi}}_i^{(a)},
\qquad
\widetilde{\mathbf{g}}_k
=
(\widetilde{g}_{k,m})_{m\in\mathcal{M}_a}.
\]
By convexity of \(\widetilde{g}_k\), for any comparator \(\mathbf{u}\in\Delta_{|\mathcal{M}_a|}\),
\begin{equation}
\widetilde{g}_k(\widetilde{\mathbf{w}}_k)-\widetilde{g}_k(\mathbf{u})
\le
\langle \widetilde{\mathbf{g}}_k,\widetilde{\mathbf{w}}_k-\mathbf{u}\rangle.
\label{eq:app_convex_step}
\end{equation}
Thus, it is sufficient to bound the linearized regret
\begin{equation}
\sum_{k=1}^{R}
\langle \widetilde{\mathbf{g}}_k,\widetilde{\mathbf{w}}_k-\mathbf{u}\rangle.
\label{eq:app_lin_regret}
\end{equation}

Define the potential
\begin{equation}
W_k
=
\sum_{m\in\mathcal{M}_a}
\bar{\pi}_m
\exp\!\left(
-\eta\sum_{j=1}^{k-1}\widetilde{g}_{j,m}
\right).
\label{eq:app_weight_sum}
\end{equation}
By Lemma~\ref{lem:closed_form_main},
\begin{equation}
\widetilde{w}_{k,m}
=
\frac{
\bar{\pi}_m
\exp\!\left(
-\eta\sum_{j=1}^{k-1}\widetilde{g}_{j,m}
\right)
}{W_k}.
\label{eq:app_weight_def}
\end{equation}
Therefore
\begin{equation}
\frac{W_{k+1}}{W_k}
=
\sum_{m\in\mathcal{M}_a}
\widetilde{w}_{k,m}\exp(-\eta \widetilde{g}_{k,m}).
\end{equation}
Using Hoeffding's lemma and the boundedness
\(
\|\widetilde{\mathbf{g}}_k\|_{\infty}\le G_a
\),
we obtain
\begin{equation}
\log\frac{W_{k+1}}{W_k}
\le
-\eta\langle \widetilde{\mathbf{w}}_k,\widetilde{\mathbf{g}}_k\rangle
+
\frac{\eta^2 G_a^2}{2}.
\label{eq:app_upper_W}
\end{equation}
Summing over \(k=1,\ldots,R\) yields
\begin{equation}
\log W_{R+1}-\log W_1
\le
-\eta
\sum_{k=1}^{R}
\langle \widetilde{\mathbf{w}}_k,\widetilde{\mathbf{g}}_k\rangle
+
\frac{\eta^2 G_a^2 R}{2}.
\label{eq:app_sum_upper}
\end{equation}

On the other hand, for any comparator \(\mathbf{u}\in\Delta_{|\mathcal{M}_a|}\),
\begin{align}
W_{R+1}
&=
\sum_m \bar{\pi}_m
\exp\!\left(
-\eta\sum_{k=1}^{R}\widetilde{g}_{k,m}
\right)
\nonumber\\
&\ge
\prod_m
\left[
\frac{\bar{\pi}_m}{u_m}
\exp\!\left(
-\eta\sum_{k=1}^{R}\widetilde{g}_{k,m}
\right)
\right]^{u_m}
\end{align}
by the weighted AM-GM inequality. Taking logarithms gives
\begin{equation}
\log W_{R+1}
\ge
-\eta\sum_{k=1}^{R}\langle \mathbf{u},\widetilde{\mathbf{g}}_k\rangle
-
D_{\mathrm{KL}}(\mathbf{u}\,\|\,\bar{\boldsymbol{\pi}}).
\label{eq:app_lower_W}
\end{equation}
Since \(W_1=\sum_m\bar{\pi}_m=1\), we have \(\log W_1=0\). Combining Eq.~\eqref{eq:app_sum_upper} and Eq.~\eqref{eq:app_lower_W}, we obtain
\begin{equation}
\sum_{k=1}^{R}
\langle \widetilde{\mathbf{w}}_k-\mathbf{u},\widetilde{\mathbf{g}}_k\rangle
\le
\frac{
D_{\mathrm{KL}}(\mathbf{u}\,\|\,\bar{\boldsymbol{\pi}})
}{\eta}
+
\frac{\eta G_a^2 R}{2}.
\label{eq:app_lin_bound}
\end{equation}
Finally, combining Eq.~\eqref{eq:app_convex_step} and Eq.~\eqref{eq:app_lin_bound} yields
\begin{equation}
\sum_{k=1}^{R}
\widetilde{g}_k(\widetilde{\mathbf{w}}_k)-\widetilde{g}_k(\mathbf{u})
\le
\frac{
D_{\mathrm{KL}}(\mathbf{u}\,\|\,\bar{\boldsymbol{\pi}})
}{\eta}
+
\frac{\eta G_a^2 R}{2}.
\end{equation}
Choosing
\[
\eta
=
\sqrt{
\frac{
2D_{\mathrm{KL}}(\mathbf{u}\,\|\,\bar{\boldsymbol{\pi}})
}{
G_a^2 R
}
}
\]
gives
\[
\mathrm{Regret}_{i,a}(R)
=
O(\sqrt{R}).
\]
This completes the proof.
\end{proof}

\subsection{Proof of the Long-Term Performance Guarantee}
\label{app:proof_lyapunov}

\begin{proof}
Let
\begin{equation}
Y_{\tau}(t)
=
\frac{1}{N}\sum_{i=1}^{N}\tau_i^{(a_{i,t})}(\rho(t)),
~
Y_c(t)
=
\frac{1}{N}\sum_{i=1}^{N}c_i^{(a_{i,t})},
~
Y_{\rho}(t)=\rho(t),\nonumber
\end{equation}
and define the Lyapunov function
\begin{equation}
\mathcal{L}(\mathbf{Q}(t))
=
\frac{1}{2}
\left(
Q_{\tau}^2(t)+Q_c^2(t)+Q_{\rho}^2(t)
\right),
\label{eq:app_lyap}
\end{equation}
where \(\mathbf{Q}(t)\triangleq(Q_{\tau}(t),Q_c(t),Q_{\rho}(t))\). The conditional drift is
\[
\Delta(t)
=
\mathbb{E}\!\left[
\mathcal{L}(\mathbf{Q}(t+1))-\mathcal{L}(\mathbf{Q}(t))
\,\middle|\,
\mathbf{Q}(t)
\right].
\]
From the queue updates Eq.~\eqref{eq:q_tau}--\eqref{eq:q_rho} and the inequality \(([x]^+)^2\le x^2\), we obtain
\begin{align}
&\mathcal{L}(\mathbf{Q}(t+1))-\mathcal{L}(\mathbf{Q}(t))
\\
&\le
\frac{1}{2}
\Big(
(Y_{\tau}(t)-\tau_{\max})^2
+
(Y_c(t)-c_{\max})^2
+
(Y_{\rho}(t)-\rho_{\max})^2
\Big)
\nonumber\\
&
+
Q_{\tau}(t)(Y_{\tau}(t)-\tau_{\max})
+
Q_c(t)(Y_c(t)-c_{\max})
\nonumber\\
&+
Q_{\rho}(t)(Y_{\rho}(t)-\rho_{\max}).\nonumber
\label{eq:app_drift_expand}
\end{align}
Because all one-slot quantities are bounded, there exists a finite constant \(B_0\) such that
\begin{equation}
\frac{1}{2}
\Big(
(Y_{\tau}(t)-\tau_{\max})^2
+
(Y_c(t)-c_{\max})^2
+
(Y_{\rho}(t)-\rho_{\max})^2
\Big)
\le B_0.
\label{eq:app_B0}
\end{equation}
Taking the conditional expectation yields
\begin{align}
\Delta(t)
&\le
B_0
+
Q_{\tau}(t)\mathbb{E}[Y_{\tau}(t)-\tau_{\max}\mid \mathbf{Q}(t)]
\\
&
+
Q_c(t)\mathbb{E}[Y_c(t)-c_{\max}\mid \mathbf{Q}(t)]
+
Q_{\rho}(t)\mathbb{E}[Y_{\rho}(t)-\rho_{\max}\mid \mathbf{Q}(t)].\nonumber
\label{eq:app_drift_bd}
\end{align}
Adding \(V\mathbb{E}[L(t)\mid \mathbf{Q}(t)]\) to both sides gives
\begin{align}
&\Delta(t)+V\mathbb{E}[L(t)\mid \mathbf{Q}(t)]
\\
&\le
B_0
+
\mathbb{E}
\Big[
V L(t)
+
Q_{\tau}(t)(Y_{\tau}(t)-\tau_{\max})
\nonumber\\
&\quad
+
Q_c(t)(Y_c(t)-c_{\max})
 +
Q_{\rho}(t)(Y_{\rho}(t)-\rho_{\max})
\;\Big|\;\mathbf{Q}(t)
\Big].
\label{eq:app_dpp}
\end{align}
According to the theorem statement, the implemented controller minimizes the right-hand side of Eq.~\eqref{eq:app_dpp} within additive error \(\delta\). \blue{By Proposition~\ref{prop:delta_decomp_main}, one admissible concrete choice is \(\delta=\delta_{\mathrm{sur}}+\delta_{\mathrm{fp}}+V L_{\ell}(\varepsilon_{\mathrm{pred}}+L_z\varepsilon_{\mathrm{ret}})\); however, the drift analysis below only requires a finite uniform additive bound.}

\textbf{Part 1: optimality gap.}
Let \(\Pi^{*}\) be an optimal stationary policy with long-term average loss \(L^{*}\) and feasible average constraints. Plugging \(\Pi^{*}\) into the right-hand side of Eq.~\eqref{eq:app_dpp} yields
\begin{equation}
\Delta(t)+V\mathbb{E}[L(t)\mid \mathbf{Q}(t)]
\le
B_0+\delta+V L^{*}.
\label{eq:app_compare_opt}
\end{equation}
Taking total expectation and summing over \(t=0,\ldots,T-1\), the Lyapunov terms telescope:
\begin{equation}
\mathbb{E}[\mathcal{L}(\mathbf{Q}(T))]-\mathbb{E}[\mathcal{L}(\mathbf{Q}(0))]
+
V\sum_{t=0}^{T-1}\mathbb{E}[L(t)]
\le
T(B_0+\delta+V L^{*}).
\label{eq:app_telescope_opt}
\end{equation}
Since \(\mathcal{L}(\mathbf{Q}(T))\ge 0\),
\begin{equation}
V\sum_{t=0}^{T-1}\mathbb{E}[L(t)]
\le
T(B_0+\delta+V L^{*})
+
\mathbb{E}[\mathcal{L}(\mathbf{Q}(0))].
\end{equation}
Divide by \(VT\) and let \(T\to\infty\) to obtain
\begin{equation}
\bar{L}^{\mathrm{alg}}
\le
L^{*}+\frac{B_0+\delta}{V}.
\end{equation}

\textbf{Part 2: average queue backlog.}
Let \(\Pi^{\xi}\) be a Slater policy satisfying the constraints with slack \(\xi>0\). Then
\begin{align}
\mathbb{E}[Y_{\tau}^{\Pi^\xi}(t)]\le \tau_{\max}-\xi,\\
\mathbb{E}[Y_c^{\Pi^\xi}(t)]\le c_{\max}-\xi,\\
\mathbb{E}[Y_{\rho}^{\Pi^\xi}(t)]\le \rho_{\max}-\xi.
\label{eq:app_slater}
\end{align}
Substituting \(\Pi^{\xi}\) into Eq.~\eqref{eq:app_dpp} gives
\begin{align}
&\Delta(t)+V\mathbb{E}[L(t)\mid \mathbf{Q}(t)]\nonumber\\
\le&
B_0+\delta+V L^{\xi}
-
\xi\big(Q_{\tau}(t)+Q_c(t)+Q_{\rho}(t)\big),
\label{eq:app_compare_slater}
\end{align}
where \(L^{\xi}\) is the average loss of \(\Pi^{\xi}\). Dropping the nonnegative term \(V\mathbb{E}[L(t)\mid \mathbf{Q}(t)]\) on the left-hand side and summing over \(t=0,\ldots,T-1\) yields
\begin{align}
&\mathbb{E}[\mathcal{L}(\mathbf{Q}(T))]-\mathbb{E}[\mathcal{L}(\mathbf{Q}(0))]\\
\le &
T(B_0+\delta+V L^{\xi})
-
\xi\sum_{t=0}^{T-1}
\mathbb{E}[Q_{\tau}(t)+Q_c(t)+Q_{\rho}(t)].\nonumber
\label{eq:app_telescope_slater}
\end{align}
Rearranging,
\begin{equation}
\xi\sum_{t=0}^{T-1}
\mathbb{E}[Q_{\tau}(t)+Q_c(t)+Q_{\rho}(t)]
\le
T(B_0+\delta+V L^{\xi})
+
\mathbb{E}[\mathcal{L}(\mathbf{Q}(0))].
\end{equation}
Divide by \(T\xi\) and let \(T\to\infty\):
\begin{equation}
\limsup_{T\to\infty}
\frac{1}{T}
\sum_{t=0}^{T-1}
\mathbb{E}[Q_{\tau}(t)+Q_c(t)+Q_{\rho}(t)]
\le
\frac{B_0+\delta+V L^{\xi}}{\xi}.
\label{eq:app_avg_queue_bd}
\end{equation}
Hence, the average queue backlog is \(O(V)\).

\textbf{Part 3: mean-rate stability.}
Assume that the one-slot queue increments are uniformly bounded, i.e., for each \(g\in\{\tau,c,\rho\}\) there exists \(C_g<\infty\) such that
\[
|Q_g(t+1)-Q_g(t)|\le C_g,\qquad \forall t.
\]
Fix any \(\alpha\in(0,1)\). For any \(t\in\{\lfloor \alpha T\rfloor,\ldots,T-1\}\),
\[
Q_g(T)\le Q_g(t)+C_g(T-t)\le Q_g(t)+C_g(1-\alpha)T.
\]
Averaging over \(t=\lfloor \alpha T\rfloor,\ldots,T-1\), taking expectation, and dividing by \(T\), we obtain
\begin{equation}
\frac{\mathbb{E}[Q_g(T)]}{T}
\le
\frac{1}{(1-\alpha)T^2}
\sum_{t=\lfloor \alpha T\rfloor}^{T-1}\mathbb{E}[Q_g(t)]
+
C_g(1-\alpha).
\label{eq:app_mrs}
\end{equation}
Because Eq.~\eqref{eq:app_avg_queue_bd} implies finite time-average backlog, the first term on the right-hand side of Eq.~\eqref{eq:app_mrs} vanishes as \(T\to\infty\). Therefore
\[
\limsup_{T\to\infty}\frac{\mathbb{E}[Q_g(T)]}{T}
\le
C_g(1-\alpha).
\]
Letting \(\alpha\uparrow 1\) yields
\[
\lim_{T\to\infty}\frac{\mathbb{E}[Q_g(T)]}{T}=0,
\qquad g\in\{\tau,c,\rho\}.
\]
Thus, all virtual queues are mean-rate stable. By Lemma~\ref{lem:queue_stability}, the time-average latency, communication, and cloud-usage constraints are satisfied. This completes the proof.
\end{proof}

% \input{Chapter/c6:conclusion}

%%@@@@@@@@@@@@@@@@@@@@@@@@@@@@@@@@@@@@@@@@@@@@
%\bibliographystyle{EMSart}
\bibliographystyle{IEEEtran}
\bibliography{refs}

\end{document}